\documentclass[11pt]{article}

\oddsidemargin 0in    
\evensidemargin 0in
\topmargin -0.5in
\textheight 8.5 true in       
\textwidth 6.5 true in        
\date{}
\usepackage{times}

\usepackage[utf8]{inputenc} 
\usepackage[T1]{fontenc}    
\usepackage{hyperref}       
\usepackage{url}            
\usepackage{booktabs}       

\usepackage{amsfonts}       
\usepackage{amsmath}
\usepackage{amsthm}
\usepackage{amssymb}
\usepackage{mathtools}

\usepackage{nicefrac}       
\usepackage{microtype}      
\usepackage{xspace}
\usepackage{xcolor}
\usepackage{xfrac}

\usepackage{algorithm}
\usepackage[noend]{algpseudocode}
\usepackage{multicol,multirow}

\usepackage[numbers, compress]{natbib}
\usepackage{caption}
\usepackage{enumitem}

\usepackage{graphicx}
\usepackage{float}

\newtheorem{theorem}{Theorem}

\newtheorem{lemma}[theorem]{Lemma}
\newtheorem{definition}{Definition}
\newtheorem{proposition}{Proposition}

\usepackage{xspace}
\usepackage{bm}

\algrenewcommand\algorithmicrequire{\textbf{Input:}}
\algrenewcommand\algorithmicensure{\textbf{Output:}}

\newcommand{\method}{\text{VSR}\xspace}
\newcommand{\cvgp}{\text{VSR-GP}\xspace}
\newcommand{\gp}{\text{GP}\xspace}
\newcommand{\cvmt}{\text{VSR-MCTS}\xspace}
\newcommand{\mcts}{\text{MCTS}\xspace}

\title{Vertical Symbolic Regression}
\author{Nan Jiang, Md Nasim, Yexiang Xue\\
Department of Computer Science, Purdue University \\
\texttt{\{jiang631,mnasim,yexiang\}@purdue.edu}}

\begin{document}

\maketitle

\begin{abstract}
Automating scientific discovery has been a grand goal of Artificial Intelligence (AI) and will bring tremendous societal impact.
Learning symbolic expressions from experimental data is a vital step in AI-driven scientific discovery.
Despite exciting progress, most endeavors have focused on the \textit{horizontal} discovery paths, \textit{i.e.}, they directly search for the best expression in the full hypothesis space involving all the independent variables.
Horizontal paths are challenging due to the exponentially large hypothesis space involving all the independent variables. 
We propose \underline{\textbf{V}}ertical \underline{\textbf{S}}ymbolic  \underline{\textbf{R}}egression (\method) to expedite symbolic regression.
 The \method starts by fitting simple expressions involving a few independent variables under controlled experiments where the remaining variables are held constant. 
 It then extends the expressions learned in previous rounds by adding new independent variables and using new control variable experiments allowing these variables to vary. 
 The first few steps in vertical discovery are significantly cheaper than the horizontal path, as their search is in reduced hypothesis spaces involving a small set of variables. As a consequence, vertical discovery has the
potential to supercharge state-of-the-art symbolic regression approaches in handling complex equations with many contributing factors.
Theoretically, we show that the search space of \method can be exponentially smaller than that of horizontal approaches when learning a class of expressions.
Experimentally, 
\method outperforms several baselines in learning symbolic expressions involving many independent variables.

\end{abstract}
{\textbf{Keywords:} Vertical AI-driven Scientific Discovery, Control Variable Experiment,  Symbolic Regression, Genetic Programming, Monte Carlo Tree Search}

\section{Introduction}

Automating scientific discovery has been a grand goal of Artificial Intelligence (AI) dating back to its founders~\citep{langey1988scientificdiscovery,kulkarni1988processes,wang2023scientific};
but, it remains a holy grail. 
The underlying societal impact is immense because of its multiplier effect.
This work attacks a fundamental problem in AI-driven scientific discovery -- symbolic regression, namely, 
learning physics laws in the form of symbolic expressions from experimental data.
Much progress has been made in this domain, such as search-based methods~\citep{LANGLEY1981DataDiscovery,LENAT1977ubiquity},  genetic programming~\citep{doi:10.1126/science.1165893,DBLP:conf/gecco/VirgolinAB19,DBLP:conf/gecco/HeLYLW22}, reinforcement learning~\citep{DBLP:conf/iclr/PetersenLMSKK21,DBLP:conf/nips/MundhenkLGSFP21,DBLP:conf/iclr/PetersenLMSKK21}, deep function approximation~\citep{mcconaghy2011ffx,DBLP:conf/icnc/ChenLJ17,Raissi20Fluid,RAISSI2019PhysicsInformedNN,Liu21AIPoincare,nanovoid_tracking,chen2018neural,jumper2021highly,brunton2016sparse}, and
integrated systems~\citep{Valdes1994,king2004functional,king2009autosci,Lintott2008Galaxy}.

Most endeavors focus on \textit{horizontal} discovery paths, i.e., they directly search for the best equation in the full hypothesis space involving all independent variables (red path in Fig.\ref{fig:hv}). The horizontal search can be challenging because of the exponentially large space of possible expressions.
As a result, state-of-the-art approaches are limited to learning simple expressions composed of a small number of independent variables.
Expressions involving many independent variables are still beyond reach. 
After the conventional wisdom of training deeper models and with more data stretched to its extremity in the horizontal search, what is the next paradigm-changing idea?

\begin{figure}[!t]
    \centering
    \includegraphics[width=0.99\textwidth]{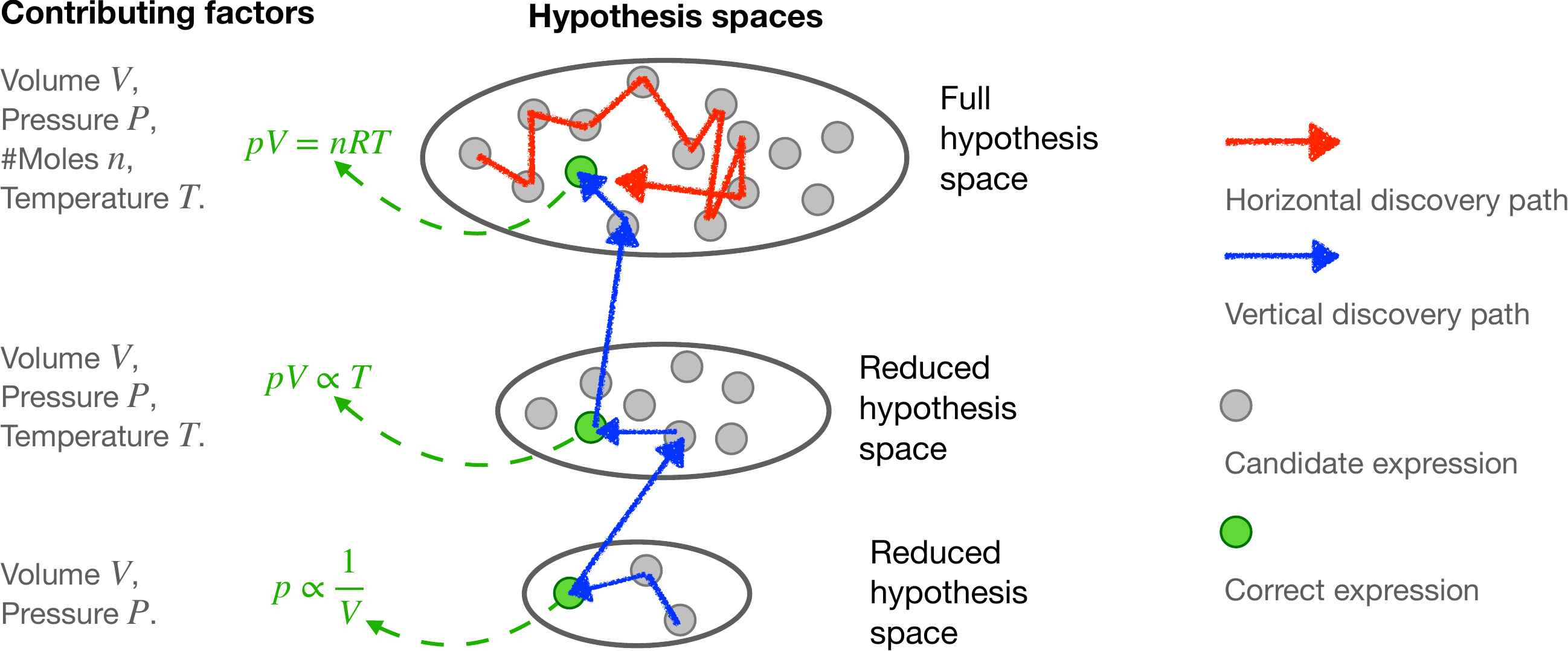}
    \caption{Vertical discovery paths (colored blue) scale up scientific discovery better than the horizontal discovery path (colored red). 
    Here, we demonstrate two paths in the discovery of the ideal gas law.  Vertical discovery starts by finding the relationship between two contributing factors ($p, V$) in a reduced hypothesis space while holding other factors constant.
    It then finds models in (extended) reduced hypothesis space with three factors ($p, V, T$), and finally in the full hypothesis space. 
    Searching following the vertical paths is way cheaper because the sizes of the reduced hypothesis spaces are exponentially smaller than the full hypothesis space. 
    In comparison, horizontal discovery, which directly searches in the full hypothesis space, can be costly.} \label{fig:hv}
\end{figure}

Interestingly, the \textit{vertical} discovery paths have been largely overlooked in AI.  
A successful example is the discovery of the ideal gas law $pV = nRT$ (shown in Fig.~\ref{fig:hv}).
%
In this example, scientists first held $n$ (gas amount) and $T$ (temperature) as constants and found that $p$ (pressure) is inversely proportional to $V$ (volume)~\citep{boyle19651662}. 
They then found that $pV$ is proportional to $T$ when the amount of gas $n$ is held constant~\citep{gay1802expansion}.
Finally, they allowed $n$ to vary and found the final equation $pV = nRT$~\citep{avagadro1811essai}. 
 This led to a vertical discovery path (blue path in Fig.~\ref{fig:hv}).
The first few steps of a vertical path can be significantly cheaper than the horizontal path, because
the searches are in reduced hypothesis spaces involving a small number of independent variables. 
As a result, the vertical discovery path has the potential to supercharge state-of-the-art approaches for uncovering complex scientific phenomena with more contributing factors than current approaches can handle.

\begin{figure}[!t]
    \centering
    \includegraphics[width=1.\textwidth]{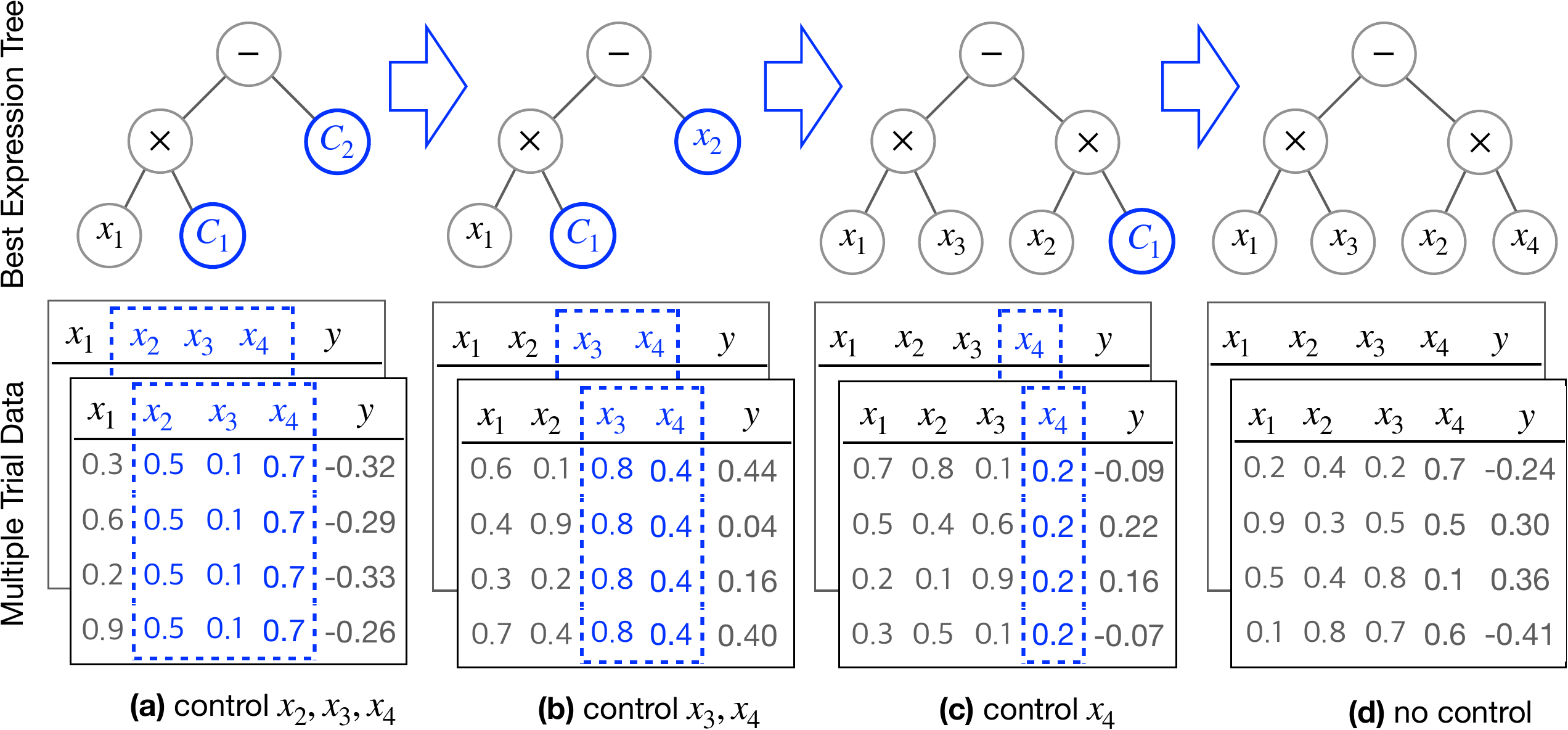}
    \caption{Example showing the process for \method to discover an expression with $4$ input variables. 
    \textbf{(a)} Initially, a reduced-form equation $\phi'=C_1 x_1 - C_2$ is found via fitting control variable data in which  $x_2, x_3, x_4$ are held as constants and only  $x_1$ is allowed to vary.  Two leaf nodes $C_1$ and $C_2$ (colored blue) are summary constants, which are sub-expressions containing the controlled variables.
    \textbf{(b)} This equation is expanded to $C_3 x_1 - C_4 x_2$ in the second stage by fitting the data in which only $x_3 and x_4$ are held constant.
    \textbf{(c,d)} This process continues until the ground-truth equation $\phi=x_1 x_3 - x_2 x_4$ is found. At the bottom, the data for the control variable experiment in each stage requires the controlled variables to take the same values. }\label{fig:running-example}
\end{figure}

We propose \underline{\textbf{V}}ertical \underline{\textbf{S}}ymbolic \underline{\textbf{R}}regression (\method), which implements the idea of vertical scientific discovery for symbolic regression tasks. 
The key insight of \method is to expand from reduced-form equations, which model a subset of independent variables, to full equations, adding one new independent variable at a time.  
The reduced-form equations are learned from \textit{a customized set of control variable experiments}, in which non-participating independent variables are held as constants. 
In other words, \method requires learning from a customized set of control variable experiments. 
This is in contrast to the current learning paradigm of most symbolic regression approaches, in which individuals learn from a fixed dataset collected a priori. 

The general procedure of \method is as follows. 
First, \method holds all independent variables except for one as constants and learns an expression that maps the single variable to the dependent variable using a symbolic regressor (shown in Fig.~\ref{fig:running-example}(a)). 
Mapping the dependence of one independent variable is easy. Hence a symbolic regressor can usually 
recover the ground-truth reduced-form equation. 
Then, \method frees one more independent variable. In each iteration, the regressor is used to modify the equations learned in the previous rounds to incorporate the new independent variable (shown in Fig.~\ref{fig:running-example}(b,c)).
This procedure repeats until all the independent variables have been incorporated into the symbolic expression (shown in Fig.~\ref{fig:running-example}(d)). 
The regressors used to modify the equations in each round can be regular symbolic regressors. In this paper, we use Genetic Programming (GP) and Monte Carlo Tree Search (MCTS).

\method heavily depends on control variable experimentation, 
which is a classic procedure widely used in science; however \method is largely overlooked in AI~\citep{lehman2004designing,DBLP:books/daglib/0022270}. 
In science, control variable experiments are used in the analysis of complex scientific phenomena involving many contributing factors. In control variable experiments, a subset of the contributing factors are held constant (i.e., controlled variables). The dependence is studied in the \textit{reduced} hypothesis space involving the remaining factors. 
The result is a \textit{reduced-form} expression that models the relationship only among the non-controlled variables (i.e., free variables). 

Theoretically, we show that \method as an incremental builder can reduce the exponential-sized hypothesis space for candidate expressions into a polynomial-sized space when fitting a class of symbolic expressions. 
In the experiments, we implement two variants of the proposed \method algorithms, \textit{namely},  \method using Genetic Programming (\cvgp) and \method using Monte Carlo Tree Search (\cvmt). 
We show that \cvgp and \cvmt not only outperform the original GP and MCTS algorithms but also outperform several state-of-the-art approaches on the symbolic regression task. 
More specifically, \method yields the expressions with the smallest median Normalized Mean-Square Errors (NMSE) 
among all 7 competing approaches on noiseless datasets (in Table~\ref{tab:Trigonometric-nmse-noiseless} and Table~\ref{tab:feynman-livermore2-nmse-noiseless}) and 
20 noisy benchmark datasets (in Table~\ref{tab:Trigonometric-nmse-noisy}). In general, \cvgp attains the best empirical results on datasets with a large number of variables while \cvmt is the best on datasets with a median number of variables.
Evaluating simpler equations, we show that our \method takes less training time and memory, but has a higher rate of recovering the ground-truth expressions compared to baselines following the horizontal paths (in Table~\ref{tab:recovery}). 
We also demonstrate that our \method is consistently better than the baselines under different evaluation metrics (in Fig.~\ref{fig:evalucate-metric}),  different quantiles (25\%, 50\% and 75\%) of the NMSE metric (in Fig.~\ref{fig:Quartile-trig-nmse-noiseless-partial}), and with different amounts of Gaussian noise added to the data (in Fig.~\ref{fig:noise-level-metric}). 

Our contributions can be summarized into the following points:
\begin{enumerate}
    \item We propose Vertical Symbolic Regression (\method) for symbolic regression tasks with many independent variables. 
    \method starts from learning equations mapping a subset of independent variables in the reduced hypothesis spaces through a customized set of control variable experiments. Then \method iteratively expands such equations to the full hypothesis space. 
    \item Because the first few steps following the vertical discovery route can be exponentially less expensive than discovering the equation in the full hypothesis space (horizontal route), \method has the potential to supercharge state-of-the-art approaches in uncovering complex scientific equations with more contributing factors than what current approaches can handle.
    \item Theoretically, we show that \method can reduce exponential-sized hypothesis spaces for symbolic regression to polynomial-sized spaces when searching for a class of symbolic expressions. 
    \item In the experiments,  we implement two variants of the proposed \method, VSR using Genetic Programming (\cvgp) and VSR using  Monte Carlo Tree Seach (\cvmt)\footnote{The collected symbolic regression datasets, our code implementation of \cvgp and \cvmt and the baselines considered for comparison, are publicly accessible at: \url{https://bitbucket.org/jiang631/scibench}.}. Empirically, we demonstrate that \cvgp and \cvmt outperform state-of-the-art symbolic regression approaches in discovering multi-variable equations from data. 
    VSR finds the expressions with the smallest median NMSEs among all 7 competing approaches on noiseless datasets and 20 noisy benchmark datasets.
\end{enumerate}

\medskip
\noindent\textbf{Connection to Existing Methods.}
Our \method is closely connected to a line of research that also implemented the human scientific discovery process using AI, pioneered by the BACON systems~\citep{DBLP:conf/ijcai/Langley77,DBLP:conf/ijcai/Langley79,DBLP:conf/ijcai/LangleyBS81,king2004functional,king2009autosci,cerrato2023rlsci}. 
Our \method uses modern machine-learning approaches such as genetic programming while BACON's discovery was driven by rule-based engines. Indeed, both approaches share a common vision – the {integration of experiment design and model learning} can further expedite scientific discovery. 

\section{Preliminaries} \label{sec:prelim}
\textbf{Symbolic Expression.} 
Let $\mathbf{x}$ be a set of input variables and $\mathbf{c}$ be a set of constants. 
A symbolic expression $\phi$ is expressed as variables $\mathbf{x}$ 
and constants $\mathbf{c}$ connected by mathematical operators. The mathematical operators can be $+,-,\times,\div$, etc. 
For example, $\phi= 5.8\times x_1\times x_2 \div x_3$ is a symbolic expression with three variables $\{x_1, x_2, x_3\}$, one constant $5.8$ and two operators $\{\times,\div\}$. The semantic meaning of a
symbolic expression follows its standard definition in arithmetic.
A symbolic expression can also be represented as an \textit{expression tree}, where variables and constants correspond to leaves, and operators correspond to the inner nodes of the tree. See, e.g., the expression tree in Figure \ref{fig:running-example}.
In addition to the expression tree, a symbolic expression can also be represented using context-free grammar~\citep{DBLP:conf/icml/TodorovskiD97}. The expression representations are tightly connected to the algorithms presented in this work. 
To avoid delay in the introduction of the main algorithm, we defer its introduction to Section~\ref{sec:regressor}.

\medskip
\textbf{Symbolic Regression.} 
Let ${D}=\{(\mathbf{x}_1, y_1), (\mathbf{x}_2, y_2)\ldots, (\mathbf{x}_n, y_n)\}$ be a dataset with $n$ samples, where $\mathbf{x}_i\in \mathbb{R}^m$ represents $m$ input variables and $y_i\in\mathbb{R}$ represents the output.
Given the dataset $D$,
the objective of symbolic regression (SR) is to find an optimal symbolic expression that best fits the dataset. A loss function $\ell$ is used to compare the fitness of different candidate expressions to dataset $D$. The optimal expression $\phi^*$ minimizes the average of this loss function:
\begin{equation} \label{eq:objective}
\phi^*\leftarrow\arg\min_{\phi\in \Pi}\;\frac{1}{n} \sum_{i=1}^n \ell(\phi(\mathbf{x}_i), y_i),
\end{equation}
Here, $\Pi$ is the hypothesis space--in other words, the set of all possible expressions.
As $\Pi$ can be exponentially large, finding the optimal expression in $\Pi$ is challenging and has been shown to be NP-hard~\citep{journal/tmlr/virgolin2022}.

Genetic programming (GP) and Monte Carlo Tree Search (MCTS) are two popular algorithms that can be used for symbolic regression.
The high-level idea of GP is to maintain a pool of candidate symbolic expressions, and iteratively improve this pool using selection, mutation, and crossover operations~\citep{DBLP:journals/gpem/UyHOML11}.
In each generation, candidate expressions undergo mutation and crossover operations probabilistically. Then in the selection step, those with the highest fitness scores, measured by how each expression predicts
the output from the input, are selected as the candidates for the next generation, together with a few randomly chosen ones to maintain diversity. After several generations, expressions with high fitness scores (\textit{i.e.}, those that fit the data well) survive in the pool of candidate solutions. The MCTS algorithm is a heuristic search algorithm applied to the space of all the expressions. 
The key concept of MCTS is to interpret mathematical operations and input variables by computational rules and symbols, establish symbolic reasoning of mathematical formulas via search trees, and employ a Monte Carlo tree search (MCTS) agent to find the optimal expression over all possible expressions, which is maintained by the search tree.
The MCTS agent obtains an optimistic selection policy through the traversal of search trees, featuring the path that maps to the arithmetic expression of the governing expressions~\citep{DBLP:conf/icml/KamiennyLLV23,DBLP:conf/iclr/Sun0W023}. 
Our \method is a general idea for symbolic regression following vertical discovery paths. In this process, both GP and MCTS can be used to identify reduced-form equations in each step. A detailed description of these two methods is presented in Section~\ref{sec:regressor}.

\section{Vertical Symbolic Regression}

We present our vertical symbolic regression solver in this section. 
Our approach first discovers a reduced-form expression that maps one of the input variables to the output with the values of the remaining variables being controlled. 
See Fig.~\ref{fig:running-example}(a) for an example of the first step. 
Then our method introduces one free variable at a time and expands the equation learned in the previous round to include this newly added variable. 
This process continues until all the variables are considered. 
See Fig.~\ref{fig:running-example}(b,c,d) for a visual demonstration. 
This approach is different from most state-of-the-art symbolic regression approaches, where those baselines learn the mapping between the output and all input variables from a fixed dataset collected a-priori.
Before we dive into the algorithm description, we first need to study what are
the outcomes of a control variable experiment and what conclusions we can draw
on the symbolic regression expression by observing such outcomes.

\subsection{Control Variable Experimentation}
\label{sec:ctrl_exp}

A control variable experiment, noted as $\texttt{CvExp}({\phi}, \mathbf{v}_c, \mathbf{v}_f, $ $\{T_k\}_{k=1}^K)$,
consists of the trial symbolic expression $\phi$, 
a set of controlled variables $\mathbf{v}_c$, 
a set of free variables $\mathbf{v}_f$, and $K$ trial experiments $T_1, \ldots, T_K$. 
The expression ${\phi}$ may have zero or multiple \textit{open constants}. 
The values of the open constants are determined by fitting the equation to the batch of data using a gradient-based optimizer, such as BFGS~\citep{bfgs}. 
To avoid abusing notation, we also use $T_k$ to denote the batch of data.

\begin{figure}[!t]
    \centering
    \includegraphics[width=\textwidth]{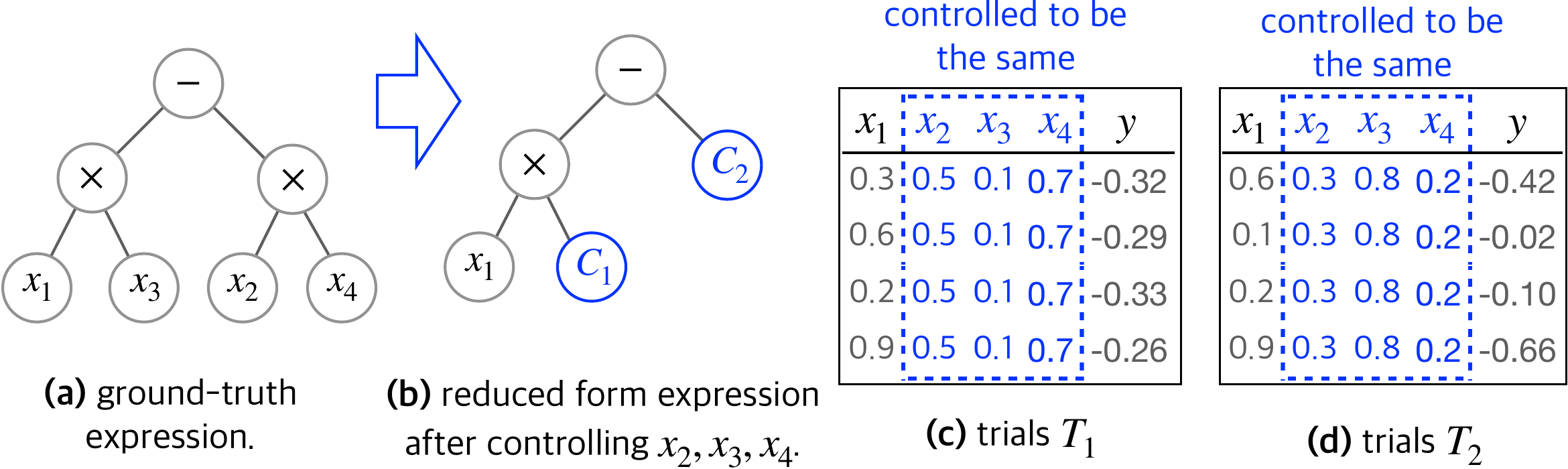}
    \caption{An example of two trials of a control variable experiment. 
    \textbf{(a)} The data of the experiment is generated by the ground-truth 
    expression $\phi=x_1x_3-x_2x_4$. \textbf{(b)} If we control
    $\mathbf{v}_c=\{x_2, x_3, x_4\}$ and only allow $\mathbf{v}_f=\{x_1\}$ to vary, it \textit{looks like}  
     the data are generated from the reduced-form equation $\phi'=C_1x_1-C_2$. \textbf{(c, d)} The generated data in two trials of the control variable experiments.
    The controlled variables are fixed within each trial but vary across trials.}
    \label{fig:reduced}
\end{figure}

\medskip
\textbf{Single Trial of a Control Variable Experiment.} 
A single trial of a control variable experiment fits the symbolic expression $\phi$ with a batch of data $T_k$. 
%
%
In the generated data $T_k$, every controlled variable is fixed to the same value while the free variables are set randomly. 
We assume that the values of the dependent variables in a batch are (noisy observations of) the ground-truth expressions with the values of the independent variables set in the batch. 
In scientific discovery, this step is achieved by conducting real-world experiments, \textit{i.e.}, controlling independent variables, and performing measurements on the dependent variable. 

For example, Fig.~\ref{fig:reduced}(c,d) demonstrates two trials ($K=2$) of a control variable experiment in which variables $x_2, x_3, x_4$ are controlled, \textit{i.e.}, $\mathbf{v}_c=\{x_2,x_3,x_4\}$. 
They are fixed to one value in trial $T_1$ (in Fig.~\ref{fig:reduced}(c)) and another value in trial $T_2$ (in Fig.~\ref{fig:reduced}(d)).
$x_1$ is the only free variable, \textit{i.e.}, $\mathbf{v}_f=\{x_1\}$. 

\medskip
\textbf{Reduced-form  Expression in a Control Variable Setting.} 
We assume that there is a ground-truth symbolic expression that produces the experimental data. 
In other words, the observed output is the execution of the ground-truth expression from the input, possibly in addition to some noise.
In control variable experiments, because the values of controlled variables are fixed in each trial, 
what we observe is the ground-truth expression in its \textit{reduced form}, where sub-expressions 
involving only controlled variables are replaced with constants.

Fig.~\ref{fig:reduced}(b) provides an example of the reduced form expression. 
Assume the data is
generated from the ground-truth expression in Fig.~\ref{fig:reduced}(a): $\phi=x_1 x_3 - x_2 x_4$.
When we control the values of the variables in $\mathbf{v}_c=\{x_2, x_3, x_4\}$, the data  \textit{looks like} they are generated from the \textit{reduced} expression: $\phi'=C_1 x_1 - C_2$. 
We can see that both $C_1$ and $C_2$ hold constant values in each trial. 
However, their values vary across trials because the values of the controlled variables change.
In trial $T_1$, when $x_2$, $x_3$, and $x_4$ are fixed to 0.5, 0.1, and 0.7,  $C_1$ takes the  value of $x_3$, \textit{i.e.}, 0.1. $C_2$ takes the value of $x_2 x_4$, \textit{i.e.}, 0.35. In trial $T_2$, $C_1=0.8$ and  $C_2=0.06$.

We call constants that represent sub-expressions involving controlled variables in the ground-truth expression the \textbf{\textit{summary constants}} and refer to constants in the ground-truth expression the \textbf{\textit{stand-alone constants}}. 
For example, $C_1$ in Fig.~\ref{fig:reduced}(b) is a summary constant, because it represents $x_3$ in the ground-truth expression.

\medskip
\textbf{Outcome of a Single Trial.} 
Given a batch of data $T_k$, the outcome of a single trial is a tuple $\langle o, \mathbf{c}, {\phi} \rangle$, where 1)  the fitness score $o\in\mathbb{R}$ measures the goodness-of-fit of candidate expression $\phi$.
One typical fitness function is the mean squared error (MSE). See Equation~\eqref{eq:loss-function} for the exact definitions of the MSE and other relevant fitness functions. 2) vector $\mathbf{c}\in\mathbb{R}^L$ is the values of all $L$ constants in the expression $\phi$ that best fit the data. 3) the predicted symbolic expression $\phi$.
For the example in Fig.~\ref{fig:reduced}, if we fit the reduced expression in (b) to the data in trial $T_1$, the best-fitted values are $\mathbf{c}_1=[0.1, 0.35]$ and the fitted expression is $\phi'_1=0.1 x_1- 0.35$. For trial $T_2$, the best-fitted values are  $\mathbf{c}_2=[0.8, 0.06]$ and the fitted expression is $\phi'_2=0.8x_1-0.06$. 
In both trials, the fitness scores (i.e., the MSE value) are $0$, indicating no errors.

\medskip
\textbf{Outcome of Multiple Trials.} 
We let the values of the control variables vary across different trials.
This corresponds to changing experimental conditions in the real-world scientific discovery process.
The outcomes of an experiment with $K$ trials are a tuple $\langle \mathbf{o}, C,\bm{\phi}'\rangle$. In this tuple, $\mathbf{o}=(o_1, \ldots, o_K)$ is the fitness score vector, where each $o_k\in\mathbb{R}$ is the fitness score of the $k$-th trial. ${C}=(\mathbf{c}_1, \ldots, \mathbf{c}_K)$ is the matrix of the fitted constants. Each row vector $C_{k,:}\in\mathbb{R}^L $ are the best-fitted values to all the constants of expression in the $k$-th trial. Each column vector $C_{:,l}\in\mathbb{R}^K $ is the fitted values to the $l$-th constant in expression across all $K$ trials. The list of fitted expressions is stored in $\bm{\phi}=(\phi_1,\ldots,\phi_K)$.

Key information is obtained by examining the outcomes of multiple trial control variable experiments:
\begin{enumerate}
    \item Consistent close-to-zero fitness scores suggest that the fitted expression is close to the ground-truth equation in the reduced form.  
    \item Given that the equation is close to the ground truth, an open constant having similar best-fitted values across trials suggests that the open constants are \textit{stand-alone}. Otherwise, that open constant is a \textit{summary} constant, that corresponds to a sub-expression involving those control variables $\mathbf{x}_c$.
In other words, an open constant is standalone if the variance of its fitted values is small.
\end{enumerate}

\subsection{Vertical Symbolic Regression Framework} 
Algorithm~\ref{alg:cvsr} shows our vertical symbolic regressor framework. The high-level idea of \method is to construct increasingly complex symbolic expressions involving an increasing number of independent variables based on control variable experiments with fewer and fewer controlled variables.

To fit an expression of $m$ variables, initially, we control the values of all $m-1$ variables and allow only one variable to vary.
We would like to find a set of expressions $\{\phi_{1,1}, \ldots, \phi_{1,M}\}$, which best fit the data in this controlled experiment. 
Notice $\{\phi_{1,1}, \ldots, \phi_{1,M}\}$ are restricted to contain only a single free variable. 
We assume the availability of a symbolic regressor, which we denote as \texttt{Regressor}, to complete this task. Two implementations of the \texttt{Regressor} based on Genetic Programming (GP) and Monte-Carlo Tree Search (MCTS) will be discussed in Section \ref{sec:regressor}. 
Because the reduced hypothesis space involves only one independent variable, the fitting task is much easier than fitting the expressions involving all $m$ variables. 
Next, for each $\phi_{1,l}$, we examine the following: (1) whether the fitting errors are consistently small across all the trials. A small error 
implies that the equation found is close to the ground-truth formula reduced to the one free variable. 
We hence \textit{freeze} all the operators of the formula in this case. Freezing means that the \texttt{Regressor}
 in later rounds cannot change these operators. This step is denoted as \texttt{FreezeEquation} in Algorithm~\ref{alg:cvsr}.
(2) In the case of small fitting errors, 
we also inspect the best-fitted values of each open constant in the discovered equation across trials. 
The constant is probably a summary constant if its values vary across trials, i.e., the variance of the fitted constant across multiple trials is high.
In other words, these constants probably represent sub-expressions involving the controlled variables in the ground-truth equation.
We thus mark these constants as \textit{summary constants}. These constants will be expanded into sub-expressions in the upcoming rounds.
The remaining constants are probably classified as stand-alone. Therefore, we also freeze them.

In the second round, \method adds a second free variable and starts fitting $\{\phi_{2, 1}, \ldots, \phi_{2,M}\}$ using the data from 
 control variable experiments involving the 
 two free variables. 
 Similar to the previous step, all $\phi_{2,l}$ are restricted to only contain the two 
 free variables. 
 Moreover, the \texttt{Regressor} is initialized from the expressions of the first round $\{\phi_{1, 1}, \ldots, \phi_{1,M}\}$. 
 After obtaining the best-fit expressions from the \texttt{Regressor}, a similar inspection is performed for every candidate equation, examining the corresponding fitness scores and the variance of the fitted constants.
 This process repeats with an increasing number of variables involved. Eventually, the predicted expressions in the last round involve all $m$ variables.

\begin{algorithm}[!t]
   \caption{Vertical Symbolic Regression through Control Variable Experiments.}\label{alg:cvsr}  
   \begin{algorithmic}[1]
   \Require{ \#input variables $m$; Mathematical Operators $O_p$; \texttt{DataOracle} for the ground-expression; a symbolic regressor \texttt{Regressor}.}
   \Ensure{The best-predicted expression.}
   \State $\mathbf{v}_c \gets \{x_1, \ldots, x_m\}$.\Comment{controlled variables}
   \State $\mathbf{v}_f \gets \emptyset$. \Comment{free variables}
    \State ${{D}_{global}} \gets \texttt{DataOracle}(\emptyset, \{x_1, \ldots, x_m\})$.
   \State $\mathcal{P} \gets \emptyset.$ \Comment{candidate expressions for round $i$}
   \State $\mathcal{Q}\gets \emptyset.$ \Comment{best expressions across all rounds}
   \medskip
   \For{$x_i \in \{x_1, \ldots, x_m\}$ }
        \State $\mathbf{v}_c \gets \mathbf{v}_c \setminus \{x_i\}$. \Comment{set $x_i$ to be free variable}
        \State $\mathbf{v}_f \gets \mathbf{v}_f \cup \{x_i\}$.
        \State ${{D}_o} \gets \texttt{DataOracle}(\mathbf{v}_c, \mathbf{v}_f)$. \Comment{construct the data oracle}
    
        \State $\mathcal{P} \gets \texttt{Regressor}(\mathcal{P}, D_o, O_p \cup \{\mathtt{const}, x_i\})$.
        
        \For{$\phi \in \mathcal{P}$} \Comment{multiple control variable trails}
        \State $\{T_k\}_{k=1}^K\gets\texttt{GenData}(D_o)$. \Comment{sample multiple batches of data}
        \State $\langle\mathbf{o}, C, \bm{\phi} \rangle\gets\texttt{CvExp}(\phi, \mathbf{v}_c, \mathbf{v}_f, \{T_k\}_{k=1}^K)$.  \Comment{fit multiple open constants in $\phi$}
            \State $\phi\gets $\texttt{FreezeEquation}($ \mathbf{o}, C, \bm{\phi}$).
        \EndFor
        %
        
        \For{$\phi \in \mathcal{P}$} \Comment{update global best expression set}
        \State $T_k\gets\texttt{GenData}(D_{global})$. \Comment{sample one batch of data}
       \State  $\langle {o}, \mathbf{c},{\phi}\rangle\gets\texttt{CvExp}(\phi, \mathbf{v}_c, \mathbf{v}_f, T_k)$. \Comment{fit open constants in $\phi$}
    \State save ${\phi}$ into $\mathcal{Q}$ if ${\phi}$ is ranked high among all equations in $\mathcal{Q}$.
    \EndFor
   \EndFor
   \State \Return the equation with best fitness score in $\mathcal{Q}$.
\end{algorithmic}
\end{algorithm}

 The proposed \method framework needs (1) a symbolic regression algorithm (denoted as \texttt{Regressor}) that can efficiently find good candidate equations in the hypothesis space involving a subset of free variables $\mathbf{v}_f$.  In this work, we implement \texttt{Regressor} using genetic programming or Monte Carlo tree search, which are detailed in Section~\ref{sec:regressor}. We leave the adaptation of \method to other state-of-the-art symbolic regressors as future work. (2) A data oracle that allows us to query customized data points in which $\mathbf{x}_c$ is held at a constant value. The data oracle is discussed in section~\ref{sec:data-oracle}.

 The whole procedure of \method is shown in Algorithm~\ref{alg:cvsr}.  Here $\mathcal{P}$ is the set of expressions that the \texttt{Regressor} found in the current round. We assume that the ground-truth equation involves at most $m$ variables, $\{x_1, \ldots, x_m\}$. $x_1, \ldots, x_m$ are moved
 from the controlled to free variables in numerical order. 
 We agree that other orders may boost its performance. However, we leave the exploration of this direction as future work. 
 When a new variable $x_i$ becomes free, in line 10 \texttt{Regressor} is applied to extend the equations found in previous rounds (containing variables $x_1, \ldots, x_{i-1}$) to the ones that model the new control variable settings well ($x_1, \ldots, x_i$ are allowed to vary, $x_{i+1}, \ldots, x_m$ are held as constants).
 In line 9, we construct a new data oracle for the regressor to use because the set of controlled variables is updated. 
 The input arguments of the {\texttt{Regressor}} are as follows: 1) the set of equations found in the previous round,  2) a data oracle that generates data from the current control variable settings, and 3) the set of operators $O_p$ allowed -- when $x_i$ is moved from the set of controlled variables to free ones, we only allow changing the equations with all the arithmetic operators, the constant, and variable $x_i$.
 Finally, in Lines 11-14 of Algorithm~\ref{alg:cvsr}, \texttt{FreezeEquation} is called for every equation found in $\mathcal{P}$. For every equation $\phi$, we run $K$-trials of control variable experiments to determine whether each constant in the expression is summary or stand-alone. 
 Notice that this decision can be made by judging the variances of the fitted errors, according to Section \ref{sec:ctrl_exp}. 
 Because the \texttt{FreezeEquation} function is deeply connected to the \texttt{regressor} as well as the representations of the expressions, we will detail this function separately in the discussions of the integration with the GP algorithm (in section~\ref{sec:cv-gp}) and the MCTS algorithm (in section~\ref{sec:cv-mcts}).
 We also maintain $\mathcal{Q}$ as the set of best expressions across all rounds. 
 An equation found in an intermediate step, e.g., when certain variables are held constant, can still be a good candidate equation for modeling all the independent variables. 
 We maintain a separate data oracle $D_{global}$, in which no variables are controlled. 
 For every equation in $\mathcal{P}$, 
 we fit its open constants with data drawn from $D_{global}$. 
 The equation will be updated into $\mathcal{Q}$ if it is ranked higher among the best equations found thus far.
In this case, when the algorithm arrives at the last round, $\mathcal{Q}$ will contain the set of best expressions.

Fig.~\ref{fig:running-example} shows the high-level idea of fitting an equation using the \method. 
Here, the process has four stages, each stage with a decreased number of controlled variables. The trial data in each stage are shown at the bottom, and the best expression found is shown at the top. The summary constants are boldfaced and colored blue. The readers can see how the fitted equations grow into the final ground-truth equation, with one free variable added at a time.

\subsection{The Availability of Data Oracle} \label{sec:data-oracle}
A crucial assumption behind the success of \method is the availability of a  $\mathtt{DataOracle}$ that returns a (noisy) observation of the dependent output with input 
variables in $\mathbf{v}_c$ controlled and $\mathbf{v}_f$ free. 
This differs from the classical setting of symbolic regression, where a dataset is obtained prior to learning \citep{matsubara2022rethinking,doi:10.1080/15598608.2007.10411855}. 
Such a data oracle represents conducting control variable experiments in the real world, which can be expensive. The detailed construction of the data Oracle in this study is provided in Appendix~\ref{apx:data_oracle}.

We argue that the integration of experiment design in the discovery of scientific knowledge is indeed the main driver of the successes of \method. 
The idea of building AI agents that mimic human scientific discovery has achieved tremendous success in early works~\citep{DBLP:conf/ijcai/Langley77,DBLP:conf/ijcai/Langley79,DBLP:conf/ijcai/LangleyBS81}. Recent work~\citep{chen2022generalisation,keren2023computational,DBLP:conf/gecco/HautBP22,DBLP:conf/gecco/HautPB23} also pointed out the importance of having a data oracle that can actively query data points, rather than learning from a fixed dataset.
Our work does not intend to show that VSR is superior in every use case. 
We acknowledge that fully controlled experiments may be difficult and expensive to conduct in many scenarios.  
In cases where it is difficult to obtain such a data oracle, one possible solution is to use deep neural networks to learn a data generator for the given set of controlled variables.
We leave it as future work. 
We also would like to point out that following the vertical discovery paths can improve any symbolic regression method. In this work, GP and MCTS are used as demonstrating examples. 

When benchmarking all the algorithms, we ensure that every algorithm for comparison has no access to information about the ground-truth equation other than through querying the data oracle. The baseline training and testing settings are described in section~\ref{sec:experiment-test-setting}.

\section{Symbolic Regressor} \label{sec:regressor}
Our vertical symbolic regression framework as outlined in Algorithm~\ref{alg:cvsr} can conceptually adopt any symbolic regression algorithm as the \texttt{Regressor} (line $10$ in Algorithm~\ref{alg:cvsr}). 
This section describes the integration of currently popular symbolic regression algorithms -- Genetic Programming (GP) and Monte Carlo Tree Search (MCTS) -- into the vertical symbolic regression framework.
We will describe the classic setups for GP and MCTS and then their needed modifications to fit in \method.
The modified regression algorithms are named \cvgp and \cvmt and are outlined in Algorithms~\ref{alg:cvgp} and~\ref{alg:mcts}, respectively.

\subsection{Vertical Symbolic Regression via Genetic Programming} \label{sec:cv-gp}
In this section, we will describe how we can adopt Genetic Programming as the \texttt{Regressor} for vertical symbolic regression. We first introduce a suitable tree representation of symbolic expression that we use in GP, then describe the classic GP approach for symbolic regression, and finally \cvgp, which is our modified version of GP.

\begin{figure}[!t]
    \centering
    \includegraphics[width=\linewidth]{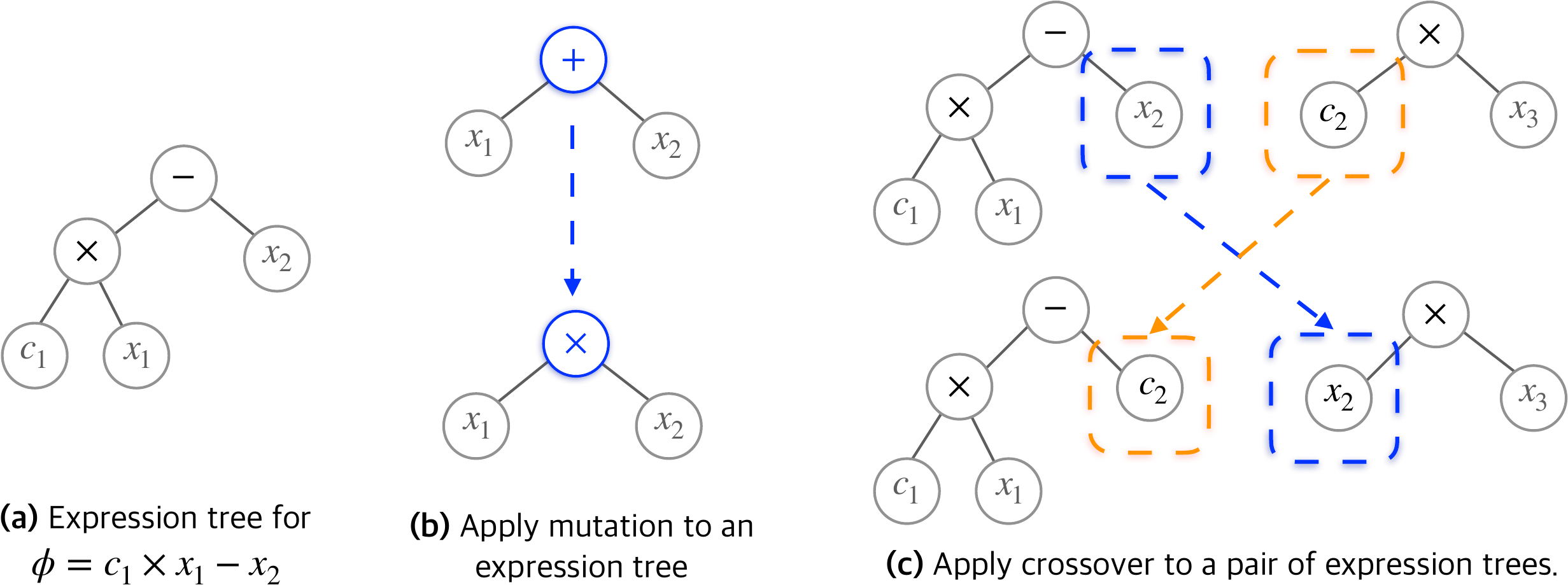}
    \caption{The mutation and crossover of expression trees in the GP algorithm. \textbf{(a)} Expression $\phi=c_1\times x_1-x_2$ represented as a binary tree. The leaf nodes are either variables like $x_1, x_2$ or constants like $c_1$. The internal nodes are mathematical operators like $\times, -$. \textbf{(b)} The mutation operation randomly modifies a node in the expression tree. \textbf{(c)} The crossover operation randomly picks a pair of sub-trees in two expressions and swaps the sub-trees.}
    \label{fig:mutate-mate}
\end{figure}

\subsubsection{Symbolic Expression as Tree}
A symbolic expression can be represented as an \textit{expression tree}, where variables and constants correspond to leaves, and operators correspond to the inner nodes of the tree. 
An inner node can have one or multiple child nodes depending on the arity of the associated operator. For example, a node representing the addition operation ($+$) has 2 children, whereas a node representing trigonometric functions like $cosine$ operation has a single child node. 
The inorder traversal of the expression tree gives out the expression in symbolic form. Fig~\ref{fig:mutate-mate}(a) presents an example of such an expression tree.

\subsubsection{Genetic Programming for Symbolic Regression}
Genetic Programming (GP) \citep{koza1994genetic} has been a popular randomized algorithm for symbolic regression. 
The core idea of GP is to first create a pool of random symbolic expressions represented as \textit{expression trees}, and then iteratively improve this pool according to a user-defined metric i.e., fitness score.
The fitness score of a candidate expression measures how well the expression fits a given dataset. 
Each iteration/generation of GP consists of 3 basic operations -- \textit{selection, mutation} and \textit{crossover}. In the \textit{selection} step, candidate expressions with the highest fitness scores are retained in the pool, while those with the lowest fitness scores are discarded. 
In the \textit{mutation} step, sub-expressions of some randomly selected candidate expressions are altered with some probability, and in the \textit{crossover} step, the sub-expressions of different candidate expressions are interchanged with some probability. From the implementation perspective, \textit{mutation} changes a node of the expression tree while \textit{crossover} is the exchange of subtrees between a pair of trees.
We show a visual representation of \textit{mutation} and \textit{crossover} in Fig.~\ref{fig:mutate-mate}.
A few randomly chosen expressions are also added to the pool for diversity, and the next iteration of GP repeats this whole process.  
After the final generation, we obtain a pool of expressions with high fitness scores, i.e., expressions that fit the data well, as our final solutions. 

\subsubsection{Adaptation of Genetic Programming to Vertical Symbolic Regression}
The \cvgp is a minimally modified genetic programming algorithm that can be used as \texttt{Regressor} in Algorithm~\ref{alg:cvsr} in our control variable setup.

\begin{algorithm}[!t]
   \caption{ \texttt{\cvgp} as \texttt{Regressor} Algorithm~\ref{alg:cvsr}.}
   \label{alg:cvgp}
   \textbf{Input:} Candidate expression pool $\mathcal{P}$; 
   data Oracle under controlled variable $D_o$; 
   allowed mathematical operators $O_p$.\\
   \textbf{Parameters:} number of generations, probabilities of mutation and crossover, and size of the GP pool $M$.\\
   \textbf{Output:} Best predicted expression pool.
   \begin{algorithmic}[1]
   \For{each generation}
   \State Calculate the fitness scores of the expressions in $\mathcal{P}$.
   \State $\mathcal{P} \gets Selection(\mathcal{P})$.
   \State $\mathcal{P} \gets Mutation(\mathcal{P},O_p)$.
   \State $\mathcal{P} \gets Crossover(\mathcal{P})$.
   \State $\mathcal{P} \gets Optimize(\mathcal{P},D_o)$.
\EndFor
  
   \State $\mathcal{P}\gets$ Top $M$ expressions from $\mathcal{P}$ according to fitness score.
   
 \State \Return  $\mathcal{P}$.
\Statex \hrulefill

\Statex \textbf{Helper procedures} for the {\cvgp}.
    \Procedure{$Selection$}{$\mathcal{P}$}
        \State Rank and return the topmost expressions from $\mathcal{P}$ according to fitness scores.
    \EndProcedure
    \Procedure{$Mutation$}{$\mathcal{P},O_p$}
        \For{every expression $\phi$ in $\mathcal{P}$}
        \State Select $\phi$ for mutation with some probability.
        \State If selected, randomly change a subexpression of $\phi$ with operators in $O_p$.
        \EndFor
    \EndProcedure
    \Procedure{$Crossover$}{$\mathcal{P}$}
        \State For every pair of expressions in $\mathcal{P}$, randomly exchange subexpressions to create two new expressions.
    \EndProcedure
    \Procedure{$Optimize$}{$\mathcal{P},D_o$}
        \For{every expression $\phi$ in $\mathcal{P}$}
        \State $T_k \gets$ Sample data with Oracle $D_o$.
        \State Find optimal constants in $\phi$ to fit data $T_k$ using an off-the-shelf optimizer.
        \EndFor
    \EndProcedure
\end{algorithmic}
\end{algorithm}

Algorithm~\ref{alg:cvgp} gives the general outline of \cvgp. Line $1-8$ gives the basic outline of the algorithm, while line $9-20$ gives out details of the subroutines. We start with a given pool of candidate symbolic expressions, a data Oracle, and a library of mathematical operators as inputs. The total number of generations to run, the probability of mutation and crossover, and the size of the candidate expression pool are internal parameters and are set manually.
In every generation, we first calculate the fitness scores of the candidate expressions according to the fitness score function and retain the high-scoring expressions through the \textit{Selection} step (line $2-3$). We then perform random mutation and crossover of the candidate expressions (line $4-5$). 
Using the data Oracle under the control variable setup, we then sample the dataset and find the optimal values of the constants in the new candidate expressions (line $6$). 
At the end of the final generation, we retain the top $M$ candidate expressions according to fitness scores, and return this pool $\mathcal{P}$ (line $7-8$). 

There are two key differences between the classic GP and our \cvgp:
\begin{enumerate}
    \item During $Mutation$ and $Crossover$, our \cvgp algorithm only alters the mutable nodes of the candidate expression trees. In classic GP, all the tree nodes are mutable, while in \cvgp, the mutable nodes of the expression trees and set of operators $O_p$ are preset by the \texttt{FreezeEquation} in Algorithm~\ref{alg:cvsr}. 
    \item The $Optimize$ function in \cvgp dynamically samples data with Oracle $D_o$ under the control variable setup, whereas classic GP uses a fixed static dataset.
\end{enumerate}

\subsection{Vertical Symbolic Regression via Monte Carlo Tree Search} \label{sec:cv-mcts} 
In this section, we describe how we can modify the Monte Carlo Tree Search (MCTS) for vertical symbolic expression. We will start by describing how we represent expression in a suitable manner for MCTS, and then describe the key idea of MCTS for symbolic regression. Afterward, we will show \cvmt, which is our modified version of the MCTS for vertical symbolic regression.

\subsubsection{Symbolic Expression with Context-Free Grammar}
A symbolic expression can be represented using an appropriate context-free grammar~\citep{DBLP:conf/icml/TodorovskiD97}. A context-free grammar is represented by a tuple of 4 elements $\mathcal{G} = (V,\Sigma, R, S)$, where $V$ is a set of non-terminal symbols, $\Sigma$ is a set of terminal symbols, $R$ is a set of production rules and $S\in V$ is a start symbol. 
In our CFG for symbolic expression, we use:
\begin{itemize}
    \item Set of non-terminal symbols representing sub-expressions as $V=\{A\}$.
    \item Set of input variables and constants $\{x_1,x_2,\dots,\mathtt{const}\}$ as $\Sigma$.
    \item Set of production rules representing possible mathematical operations such as addition, subtraction, multiplication, and division, as $R$.
    \item A single start symbol $A \in V$.
\end{itemize}

Beginning with the start symbol $A$, successive applications of the production rules in $R$ in different orders result in different CFG expressions. A CFG expression with only terminal symbols is a valid mathematical expression, whereas expressions with a mixture of non-terminal and terminal symbols can be further converted into other CFG expressions. 

In Fig.~\ref{fig:expr-as-rules}, we present an example of how to generate the mathematical expression $\phi= c_1\times x_1\div x_2$ from the start symbol $A$ using the CFG production rules $R$.
We first use the multiplication production rule $A\to A\times A$. Here, $\to$ represents replacement. Using the rule $A\to A\times A$ means that the symbol $A$ in expression $\phi=A$ is replaced with $A\times A$, resulting in $\phi = A\times A$. By using production rules repeatedly to replace non-terminal symbols, we finally arrive at our desired mathematical expression $\phi= c_1\times x_1\div x_2$.

\begin{figure}[!t]
    \centering
    \includegraphics[width=.9\linewidth]{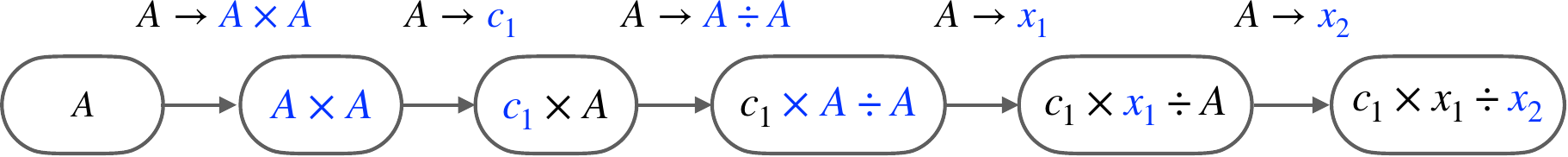}
    \caption{Symbolic expression $\phi=c_1\times x_1\div x_2$ represented as a sequence of CFG production rules. Each rule is applied step by step and we highlight the parts (in blue) that get replaced.}
    \label{fig:expr-as-rules}
\end{figure}

\subsubsection{Monte Carlo Tree Search for Symbolic Regression}

\mcts for symbolic regression is a systematic search process, that involves balancing between exploration and exploitation while searching for the optimal solution. \mcts maintain a \textit{search tree}, where \textit{nodes} represent expressions according to a context-free grammar $\mathcal{G}$, and an \textit{edge} represents a production rule of $\mathcal{G}$~\citep{DBLP:conf/icml/TodorovskiD97,DBLP:conf/dis/GanzertGSK10,DBLP:journals/kbs/BrenceTD21}. The node expressions can contain both terminal and non-terminal symbols of $\mathcal{G}$. A node expression containing at least one non-terminal symbol is considered ``expandable", in the sense that the search tree can be expanded from this node, by using different production rules of $\mathcal{G}$ and creating child node expressions.

Each node in the search tree also maintains an associated upper confidence bound (UCB) score \citep{kocsis2006bandit} as follows: 
\begin{equation*}
    UCB(s, a) = Reward(s, a) + c\sqrt{\ln[N (s)]/N (s, a)} 
\end{equation*}
Here, $Reward(s,a)$ denotes the averaged reward after applying the production rule $a\in R$ at search tree node $s$; $N(s)$ is the number of visits to node $s$, while $N(s,a)$ is the number of times rules $a$ is selected at node $s$. The constant $c$ is a hyper-parameter.

\begin{figure}[!t]
    \centering
    \includegraphics[width=1.0\linewidth]{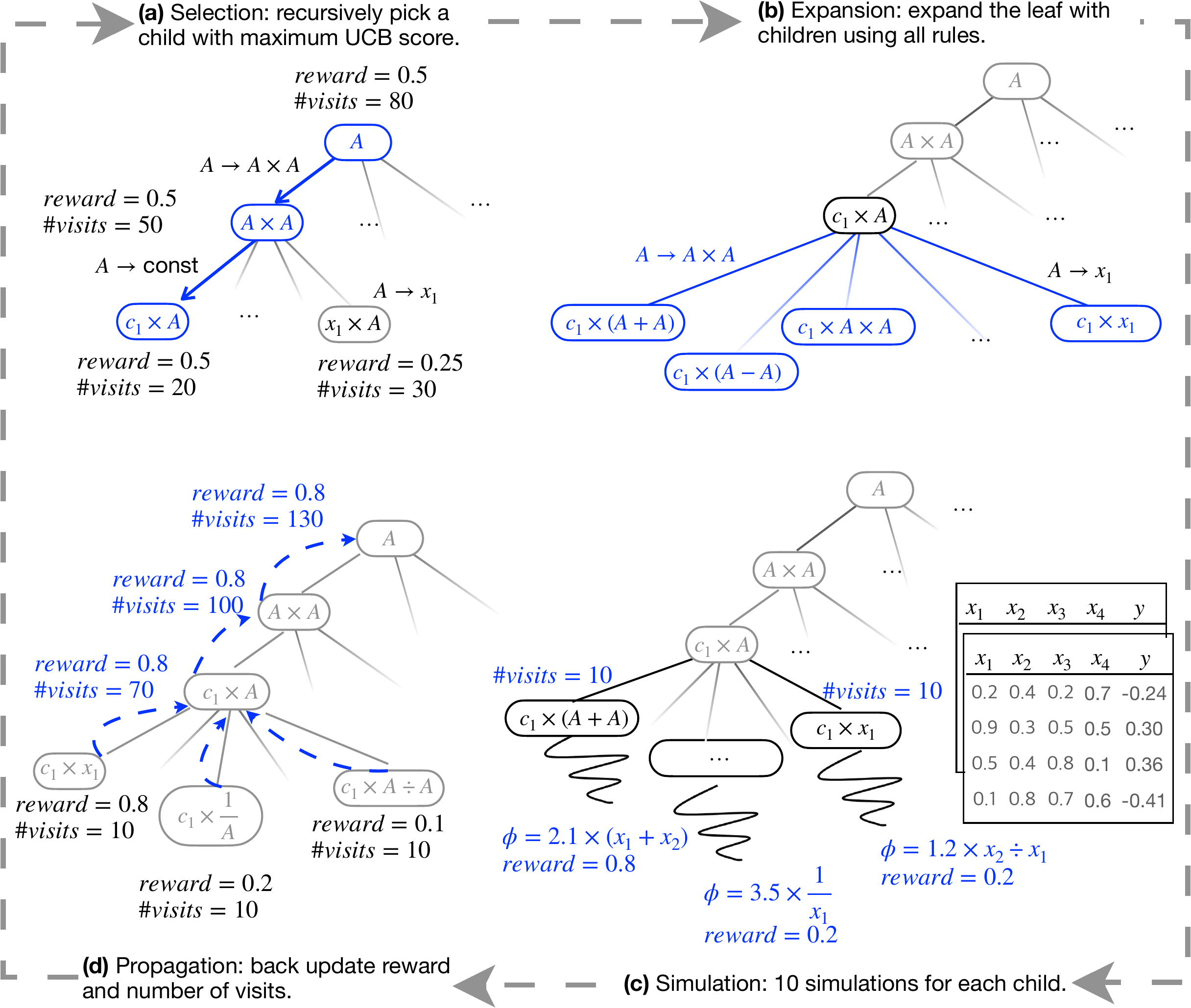}
    \caption{Phases of the Monte Carlo tree search algorithm.
    A search tree is grown through repeated application of the above four phases. \textbf{(a)} Starting from the root, we recursively pick the child with the highest upper confidence bound score among its children until reaching a leaf.  \textbf{(b)} We expand the selected leaf with all the rules and generate its children. \textbf{(c)} For each child node, we run 10 simulations to fill out the non-terminal symbols using the existing rules to form a complete expression. Then, we use the data to fit the values of the open constants in the randomly generated expressions. \textbf{(d)} We populate the rewards as well as the number of visits of expanded children to their parents up to the root of the search.}
    \label{fig:mcts-steps}
\end{figure}

To discover an optimal mathematical expression to fit a dataset, MCTS starts with an initial guess expression containing non-terminal and (optionally) terminal symbols from context-free grammar $\mathcal{G}$. This initial expression acts as the root node of the search tree. Afterward, MCTS applies the following 4 operations for a fixed number of iterations/episodes:
\begin{itemize}
    \item Selection: Starting from the root node, successively select children with the best UCB scores to obtain a leaf node $s$ of the search tree   
    \item Expansion: At leaf node $s$, apply the production rules from $\mathcal{G}$, creating child nodes of $s$, and expanding the search tree by one level from node $s$.
    \item Simulation: For every child of $s$, perform a fixed number of rollout rounds. The UCB scores for each child are calculated by aggregating the results of these rollouts. 
    In each round of rollouts, generate a valid mathematical expression $\phi$ from the child node by randomly applying the production rules of $\mathcal{G}$. After generating $\phi$, find the optimal constant values and fitness score of $\phi$ by using the dataset. The fitness score acts as a reward for subsequent computations.
    \item Backpropagation: Update the UCB scores and the number of visits of $s$, and all ancestors of $s$ up to the root node of the search tree. 
\end{itemize}

At the end of the final iteration, the expression with the highest fitness score, generated during the numerous rounds of rollouts during simulation is finalized as the best expression.

\subsubsection{Adaptation of Monte Carlo Tree Search to Vertical Symbolic Regression}
The \cvmt is a modified version of the classic MCTS for symbolic regression, that can be used as a \texttt{Regressor} in our vertical symbolic regression framework.

\begin{algorithm}[!t]

   \caption{\cvmt as \texttt{Regressor} in Algorithm \ref{alg:cvsr}. 
   }
   \label{alg:mcts} 
   \textbf{Input:} Initial expression $\{\phi_{init}\}$; 
   data Oracle under controlled variable $D_o$; 
   set of mathematical operators $O_p$.\\
   \textbf{Output:} best candidate expression $\{\phi^*$\}.\\
   \textbf{Parameters:} Total episodes $N_{episodes}$; Number of simulations $N_{sim}$.
   \begin{algorithmic}[1]
   \State $\mathcal{G} \gets$ Construct a context-free grammar from operators $O_p$.
   \State $root\gets$ Create node expression from $\phi_{init}$, by replacing summary constants with non-terminal symbols in $\mathcal{G}$.
   \For{\textit{every episode}}
        \State $current \gets SelectBestLeaf(root)$ .\Comment{Selection}
        \State $children(current) \gets Expand(current)$. \Comment{Expansion}
        \For{every $s\in children(current)$} \Comment{Simulation}
            \State $Q \gets Rollout(s,N_{sim})$.
        \For{every $\phi \in Q$}
        \State $T_k \gets$ Sample data with Oracle $D_o$.
        \State Fit constants in $\phi$ to fit data $T_k$ using an off-the-shelf optimizer.
        \State compute UCB score for $s$. 
   \EndFor
   \State Update UCB score for $current$ node and its ancestors. \Comment{Backpropagation}
   \EndFor
   \EndFor
   \State $\phi^*\gets$ select expression with the best fitness score among all searched expressions.

   \State \Return $\{\phi^*\}$. 
   \Statex
   \hrulefill
   \Statex \textbf{Helper procedures} for \cvmt
   \Procedure{$SelectBestLeaf$}{$root$}
        \State Starting from $root$, repeatedly select the child node with the best UCB score until it reaches a leaf node.
    \EndProcedure
    \Procedure{$Expand$}{$current$}
        \State Apply each production rule of $\mathcal{G}$ on $current$, to obtain child nodes of $current$ node.
    \EndProcedure
    \Procedure{$Rollout$}{$s,N_{sim}$}
        \State $Q\gets \emptyset$.
        \For{$N_{sim}$ times}
        \State $\phi \gets$ Apply production rules of $\mathcal{G}$ randomly and generate expression containing only terminal symbols of $\mathcal{G}$.
        \State Add $\phi$ into $Q$.
        \EndFor
        \Return Set of randomly generated expressions $Q$.
    \EndProcedure
\end{algorithmic}
\end{algorithm}

\begin{figure}[!t]
    \centering
    \includegraphics[width=\linewidth]{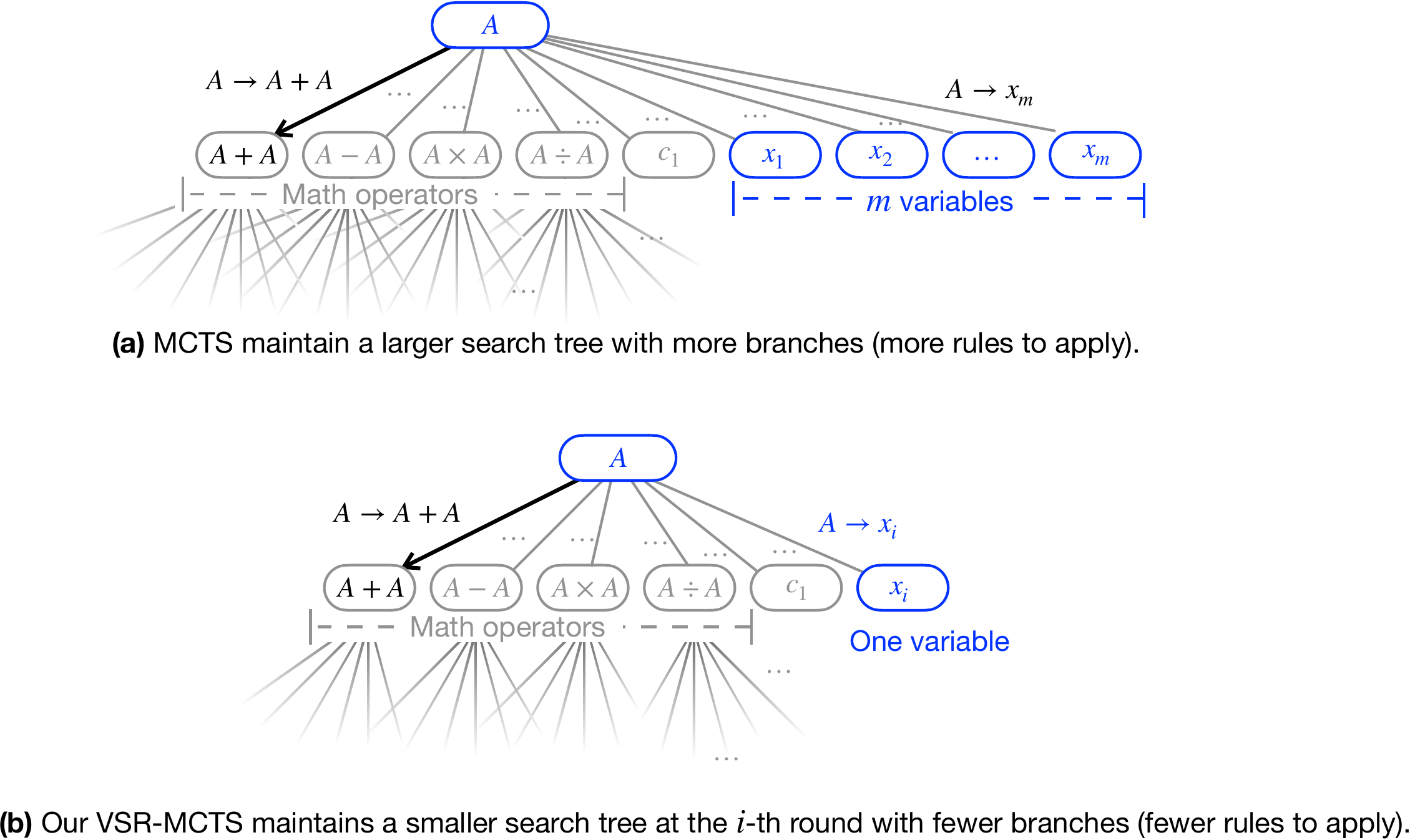}
    \caption{\cvmt maintains a smaller search tree compared to \mcts, and thus is more efficient to explore. \textbf{(a)} \mcts maintains a search tree that supports all the mathematical operators and all $m$ input variables, to search for the full expression directly. \textbf{(b)} \cvmt maintains a smaller search tree with the same set of mathematical operators but one variable (variable $x_i$ in the $i$-th round), to search for the reduced form expression, which has a much greater chance than \mcts.}
    \label{fig:cvmt-diff}
\end{figure}

Algorithm~\ref{alg:mcts} gives the general outline of \cvmt. We start with the current best symbolic expression $\phi_{init}$, a data Oracle under control variable setup, and a library of mathematical operators $O_p$ as input. 
First, we construct a context-free grammar $\mathcal{G}$ for symbolic regression using $O_p$ (line 1). For example, a simple CFG can have a single non-terminal symbol $A$. Mathematical operations in $O_p$ such as addition are added as a production rule $A\to A+A$ in $\mathcal{G}$. For each variable $x_i\in O_p$, we add the production rule $A\to x_i$, and for the constant coefficient, we add the rule $A\to \texttt{const}$.
We then convert $\phi_{init}$ by replacing each summary constant with a non-terminal symbol. This constitutes the root node of our search tree (line 2). We then repeat 4 basic operations of \mcts -- \textit{selection, expansion, simulation} and \textit{backpropagation} for a fixed number of episodes. In each episode, first, we select the best \textit{current} leaf node of the search tree, by starting from the root node and repeatedly selecting the child with the best UCB score (line 4). We then apply the production rules of $\mathcal{G}$, and obtain the child nodes of the \textit{current} node. For every child of \textit{current} node, we then perform $N_{sim}$ rounds of rollouts and compute their UCB score (lines 6-11). In each rollout round, we generate a valid mathematical expression from the child nodes, by randomly applying the production rules of $\mathcal{G}$, until all non-terminal symbols are eliminated. Using the data Oracle under the control variable setup $D_o$, we find the optimal constant coefficients for the expressions from simulation rollout rounds and compute their reward scores/fitness scores. We can now calculate the UCB score of every child of $current$, and subsequently backpropagate these results and update the UCB scores of $current$ node and all its ancestors (line 12). At the end of the final episode, we return the expression with the optimal fitness score among all the generated expressions as a single element set (line 13).


There are two key differences between classic \mcts and our \cvmt:
\begin{enumerate}
\item The root of \cvmt is transformed from the best expression in the previous round in \cvmt, while it is always ``$A$'' in \mcts. During the transfer, summary constants are replaced with the non-terminal symbol (i.e., ``$A$''). The production rules in \cvmt at $i$-th round consider only variable $x_i$ ($A\to x_i$), while \mcts considers the rules of all the variables (i.e., $A\to x_1,\cdots,\ldots, A\to x_m$). See Fig.~\ref{fig:cvmt-diff} for a visual explanation. 
    \item We use our data Oracle under the control variable setup $D_o$ to find the optimal constant coefficients of the mathematical expressions. Classic MCTS uses a fixed static dataset for this step.
\end{enumerate}

\section{Theoretical Analysis} 
We show in this section that 
vertical symbolic regression brings an exponential reduction in the search space when fitting a particular class of symbolic expressions. 
To see this, we assume that symbolic regression algorithms follow a search order from simple to complex symbolic expressions and the data is noiseless. Since the following analysis is orthogonal to the representations of the expressions, we use the tree format to represent the expressions. 
\medskip
\begin{definition}
Define the set of binary expression trees containing exactly $l$ nodes as $T(l)$, the size of which is denoted as $|T(l)|$.
\end{definition}

\medskip
\begin{definition}
Define the hypothesis space of expression trees containing at most $l$ nodes as the set of all symbolic expression trees involving at most $l$ nodes, which is denoted as $S(l)$, and the size of the hypothesis space is $|S(l)|$.
\end{definition}
Notice that different expression trees may correspond to the same symbolic expression in the mathematical definition. Directly counting different expressions to bound the size of the hypothesis space is intractable.  For simplicity, we thus count the number of different expression trees instead of the number of unique expressions.

\medskip
\begin{lemma}\label{lem:searchspace}
For simplicity, assume that all operators are binary, and let $o$ be the number of operators and $m$ be the number of input variables. The size of the hypothesis space of symbolic expression trees of $l$ nodes scales exponentially; more precisely at $\mathcal{O}((4(m+1)o)^{\frac{l-1}{2}})$ and $\Omega((4(m+1)o)^{\frac{l-1}{4}})$. 
\end{lemma}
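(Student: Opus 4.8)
The plan is to count labeled full binary trees directly. In a binary expression tree every internal node carries a binary operator (with two children) and every leaf carries either one of the $m$ input variables or the constant symbol $\mathtt{const}$, giving $m+1$ choices per leaf. First I would pin down the combinatorial skeleton: a full binary tree with exactly $l$ nodes has $i$ internal nodes and $i+1$ leaves, so $l=2i+1$ and $i=(l-1)/2$; in particular $l$ must be odd and $|T(l)|=0$ for even $l$. The number of distinct unlabeled shapes on $i$ internal nodes is the Catalan number $C_i=\frac{1}{i+1}\binom{2i}{i}$, and each shape can be labeled independently by assigning an operator to each internal node and a symbol to each leaf. This gives the exact count
\begin{equation*}
|T(l)| = C_i\, o^{\,i}\,(m+1)^{\,i+1}, \qquad i=\tfrac{l-1}{2}.
\end{equation*}

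For the upper bound I would use the elementary estimate $C_i \le 4^i$ (immediate from $\binom{2i}{i}\le\sum_{k}\binom{2i}{k}=4^i$). Substituting yields
\begin{equation*}
|T(l)| \le 4^{\,i}\, o^{\,i}\,(m+1)^{\,i+1} = (m+1)\,\bigl(4(m+1)o\bigr)^{(l-1)/2},
\end{equation*}
which is $\mathcal{O}\bigl((4(m+1)o)^{(l-1)/2}\bigr)$. For the lower bound I would use the complementary estimate $\binom{2i}{i}\ge 4^i/(2i+1)$, valid because $\binom{2i}{i}$ is the largest of the $2i+1$ binomial terms summing to $4^i$; hence $C_i\ge 4^i/\bigl((i+1)(2i+1)\bigr)$ and
\begin{equation*}
|T(l)| \ge \frac{(m+1)}{(i+1)(2i+1)}\,\bigl(4(m+1)o\bigr)^{(l-1)/2}.
\end{equation*}
This already matches the upper bound's exponential base, up to a factor that is only polynomial (quadratic) in $l$.

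The only real subtlety -- and what explains the deliberately looser exponent $(l-1)/4$ in the statement -- is absorbing that polynomial denominator. Writing $B=4(m+1)o$ and noting $B\ge 8>1$ for all $m,o\ge 1$, the quantity $B^{(l-1)/4}$ grows exponentially in $l$ while $(i+1)(2i+1)=\mathcal{O}(l^2)$ grows only polynomially; therefore $B^{(l-1)/2}/\bigl((i+1)(2i+1)\bigr)\ge B^{(l-1)/4}$ for all sufficiently large $l$, which delivers $|T(l)|=\Omega\bigl((4(m+1)o)^{(l-1)/4}\bigr)$. I expect this exponent-halving step, rather than any heavy calculation, to be the crux: the tight two-sided Catalan asymptotic $C_i=\Theta(4^i/i^{3/2})$ would force a $\mathrm{poly}(l)$ gap between the natural $(l-1)/2$ bounds, and halving the exponent in the $\Omega$ bound is precisely what buys a clean, factor-free lower bound. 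Finally, should the intended object be the cumulative hypothesis space $S(l)$ rather than $T(l)$, I would close by writing $|S(l)|=\sum_{l'\le l,\ l'\ \mathrm{odd}}|T(l')|$ and observing that consecutive terms grow by the ratio $\tfrac{C_{i+1}}{C_i}\,o(m+1)\to 4(m+1)o>1$, so the sum is dominated by its largest term and inherits both bounds up to a constant factor.
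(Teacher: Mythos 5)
Your proposal is correct and follows essentially the same route as the paper: counting $|T(l)|$ as a Catalan number of shapes times $o^{i}(m+1)^{i+1}$ labelings, halving the exponent to absorb the polynomial factor in the $\Omega$ bound, and handling the cumulative space $S(l)$ by geometric domination of the sum. The only difference is cosmetic -- you use the elementary non-asymptotic bounds $\binom{2i}{i}\le 4^{i}$ and $\binom{2i}{i}\ge 4^{i}/(2i+1)$ where the paper invokes the Catalan asymptotic $C_{i}\sim 4^{i}/(i^{3/2}\sqrt{\pi})$, which if anything makes your argument slightly tighter.
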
 
\begin{proof}
    Assuming all operands are binary, a symbolic expression tree containing $l$ nodes has $\frac{l+1}{2}$ leaves and $\frac{l-1}{2}$ internal nodes. 
    The number of binary trees of $\frac{l-1}{2}$ internal nodes is given by the Cantalan number $C_{{(l-1)}/{2}} = \frac{l-1}{{l+1}}\binom{l-1}{{(l-1)}/{2}}$, 
    which asymptotically scales at $\frac{2^{{l-1}}}{{\left(\frac{l-1}{2}\right)}^{\frac{3}{2}}\sqrt{\pi}}$. 
    A symbolic expression replaces each internal node of a binary tree with an operand and replaces each leaf with either a constant or one of the input variables. 
    Because there are $o$ operands and $m$ input variables, the total number of different symbolic expression trees involving $l$ nodes is given by:
    \begin{align*}
    |T(l)| = C_{\frac{(l-1)}{2}} (m+1)^{\frac{l+1}{2}} o^\frac{l-1}{2} &=\frac{{l-1}}{{l+1}} \binom{l-1}{\frac{(l-1)}{2}}    
    \sim \frac{(4(m+1)o)^{\frac{l-1}{2}}}{ \left(\frac{l-1}{2}\right)^{\frac{3}{2}}}.
    \end{align*}
    Hence, the total number of trees up to $l$ nodes is:
    \begin{align*}
    |S(l)| = \sum_{i=0}^{\frac{(l-1)}{2}} T(2i+1) \sim \sum_{i=0}^{\frac{(l-1)}{2}} \frac{(4(m+1)o)^{i}}{ i^{\frac{3}{2}}}.
    \end{align*}
    When $i$ is sufficiently large, we can approximate the right-hand side term by:
    \begin{align*}
    (4(m+1)o)^{\frac{i}{2}} \leq \frac{(4(m+1)o)^{i}}{ i^{\frac{3}{2}}} \leq (4(m+1)o)^{i}.
    \end{align*}
    Therefore, the upper bound of the size of the  hypothesis space can be obtained:
    \begin{equation*}
    |S(l)| \leq \sum_{i=0}^{\frac{(l-1)}{2}} (4(m+1)o)^{i} \in \mathcal{O}\left((4(m+1)o)^{\frac{(l-1)}{2}}\right).
    \end{equation*}
    Similarly, we can obtain the lower bound for the size of the hypothesis space as follows:
    \begin{equation*}
    |S(l)| \geq\frac{ (4(m+1)o)^{\frac{(l-1)}{2}} }{ (\frac{(l-1)}{2})^{\frac{3}{2}}} \geq (4(m+1)o)^{\frac{(l-1)}{4}},
    \end{equation*}
    which implies 
    \begin{equation*}
    |S(l)| \in \Omega\left((4(m+1)o)^{\frac{l-1}{4}}\right).
    \end{equation*}
The proof is complete.
\end{proof}

The proof of Lemma~\ref{lem:searchspace} mainly involves counting binary trees. The exact mathematical formula is not important. 
For our purposes, it is sufficient to know that the size is exponential in the size of the expression tree $l$. 

\medskip
\begin{definition}[Simple to complex search order]
A symbolic regression algorithm follows a simple to complex search order if it expands its hypothesis space from short to long symbolic expressions; \textit{i.e.}, first searches for the best symbolic expressions in $S(1)$, and then in $S(2) \setminus S(1)$, etc. 
\end{definition}

In general, it is difficult to quantify the search order of any symbolic regression algorithm. 
However, we believe that the simple to complex order reflects the search procedures of a large class of symbolic regression algorithms. 
In fact, \cite{DBLP:journals/tcyb/ChenXZ22a} explicitly use regularizers to promote the search of 
simple and short expressions. 
Our \method follows the simple to complex search order approximately.
Indeed, GP or MTCS may encounter more complex equations
before their simpler counterparts. 
However, in general, the expressions are built from simple to complex equations in the algorithms we proposed. 

\medskip
\begin{proposition}[Exponential Reduction in the Hypothesis Space] There exists a symbolic expression $\phi$ 
of $(4m-1)$ nodes, and a horizontal symbolic regression algorithm following the simple to complex 
 order has to explore a hypothesis space whose size is exponential in $m$ to find the expression, while 
\method following the simple to complex 
 order only expands $\mathcal{O}(m)$ constant-sized hypothesis spaces.
 
\begin{proof}
Consider a dataset generated by the ground-truth symbolic expression made up of 2 operators ($+, \times$), $2m$ input variables, and $(4m-1)$ nodes:
\begin{equation}
(x_1 + x_2) (x_3 + x_4) \ldots (x_{2m-1} + x_{2m}).
\end{equation}
To search for this symbolic regression, a horizontal symbolic regression algorithm following the simple to complex order needs to consider all 
expression trees up to $(4m-1)$ nodes. According to Lemma~\ref{lem:searchspace}, the normal algorithm has a hypothesis space of at least $\Omega((16m + 8)^{m-1/2})$, which is exponential in $m$.

On the other hand, {in the first step of \method}, $x_{2}, \ldots, x_{2m}$ are controlled and only $x_1$ {is free}.
In this case, the ground-truth equation in the reduced form is
\begin{equation}
(x_1 + C_1) D_1,
\end{equation}
in which both $C_1$ and $D_1$ are summary constants. Here $C_1$ represents $x_2$ and $D_1$ represents $(x_3 + x_4) \ldots (x_{2m-1} + x_{2m})$ in the control variable experiments. 
The reduced equation is quite simple under
the controlled environment. \method should be able to find 
the ground-truth expression exploring hypothesis space $S(5)$. 

Proving using induction. In step $2i~(1\leq i \leq m)$, variables $x_{2i+1}, x_{2i+2}, \ldots, x_{2m}$ are held as constants, $x_1, \ldots, x_{2i}$ are allowed to vary. 
The ground-truth expression in the reduced form found in the previous $(2i-1)$-th step is:
\begin{equation}\label{eq:true2i-1}
(x_1 + x_2) \ldots (x_{2i-1} + C_{2i-1}) D_{2i-1}.
\end{equation}
The \method needs to extend this equation to be the ground-truth expression in the reduced form for the $2i$-th step, which is:
\begin{equation}\label{eq:true2i}
(x_1 + x_2) \ldots (x_{2i-1} + x_{2i}) D_{2i}.
\end{equation}
The change is to replace the summary constant $C_{2i-1}$ to $x_{2i}$.
Assume that the data is noiseless and that \method can confirm expression \eqref{eq:true2i-1} is the ground-truth reduced-form expression for the previous step. This means that all the operators and variables will be frozen by the \method, and only $C_{2i-1}$ and $D_{2i-1}$ are allowed to be replaced by new expressions. 
Assume the \method algorithm follows the simple to complex search order, it should find the ground-truth expression~\eqref{eq:true2i} by searching replacement expressions of lengths up to $1$.

Similarly, in step $2i+1$, assume that \method confirms the ground-truth expression 
in the reduced form in step $2i$, \method also only needs to search in constant-sized spaces to
find the new ground-truth expression. Overall, we can see that only $\mathcal{O}(m)$ searches in constant-sized spaces are needed for \method to find the final ground-truth expression. 
\end{proof}
\end{proposition}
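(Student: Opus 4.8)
The plan is to prove the statement by exhibiting a single witness expression, bounding the horizontal search from below via Lemma~\ref{lem:searchspace}, and bounding the vertical search from above by an inductive tracking of the reduced forms produced in each round. For the witness I would take
\begin{equation*}
\phi = (x_1 + x_2)(x_3 + x_4)\cdots(x_{2m-1} + x_{2m}),
\end{equation*}
which uses the two operators $+,\times$, has $2m$ variables, $m$ addition nodes and $m-1$ multiplication nodes among its internal nodes, and $2m$ leaves, for a total of $(2m-1)+2m = 4m-1$ nodes. This fixes the instance referenced in the statement.

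First I would handle the horizontal bound. Because a simple-to-complex algorithm enumerates $S(1), S(2)\setminus S(1), \ldots$ in order of tree size, before it can return an expression of $4m-1$ nodes it must in the worst case have exhausted every tree of no more nodes; hence its explored hypothesis space contains at least all trees up to $4m-1$ nodes. Applying Lemma~\ref{lem:searchspace} with $o=2$ operators and $2m$ input variables (so the lemma's $m$ becomes $2m$ and the base $4(m+1)o$ becomes $4(2m+1)(2)=16m+8$) together with node count $l=4m-1$ gives a lower bound $\Omega\bigl((16m+8)^{(l-1)/4}\bigr) = \Omega\bigl((16m+8)^{m-1/2}\bigr)$, which is exponential in $m$.

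Next I would handle the vertical bound by induction on the round index, maintaining the invariant that at the start of each round the \method has correctly recovered the ground-truth reduced form with its summary constants correctly marked. The base case is round $1$: controlling $x_2,\ldots,x_{2m}$, the data looks generated by $(x_1 + C_1)D_1$ with two summary constants, an expression of at most $5$ nodes, so the simple-to-complex search terminates within the constant-sized space $S(5)$. For the inductive step I would show that each subsequent round only needs to replace a marked summary constant by a bounded-size subexpression: in an even round $2i$ the reduced form $(x_1+x_2)\cdots(x_{2i-1}+C_{2i-1})D_{2i-1}$ must become $(x_1+x_2)\cdots(x_{2i-1}+x_{2i})D_{2i}$, which is just the length-$1$ replacement $C_{2i-1}\mapsto x_{2i}$; in an odd round the summary constant $D$ is expanded into a new factor of the form $(x_{\mathrm{new}}+C)D'$, a replacement of at most $5$ nodes. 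Since the noiseless assumption lets \texttt{FreezeEquation} confirm the previous reduced form exactly and freeze all of its operators and variables, the regressor's search in each round is confined to these summary-constant slots and hence to a constant-sized space; with $2m$ rounds total this yields $\mathcal{O}(m)$ constant-sized searches.

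The main obstacle I anticipate is making the inductive step airtight rather than merely suggestive: I must argue carefully that the freeze mechanism leaves exactly the summary-constant positions open and that the correct constant-sized replacement is reached first under the simple-to-complex order, which in turn depends on the noiseless-data guarantee that the previous round's fit is exact (so the operators and variables are frozen correctly and $C$, $D$ are correctly classified as summary constants). Carrying this invariant cleanly through the induction---rather than the tree-counting, which is routine once Lemma~\ref{lem:searchspace} is in hand---is where the real content of the argument lies.
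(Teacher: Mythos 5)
Your proposal follows essentially the same route as the paper's own proof: the identical witness $(x_1+x_2)\cdots(x_{2m-1}+x_{2m})$ with $4m-1$ nodes, the same instantiation of Lemma~\ref{lem:searchspace} (with $o=2$ and $2m$ variables, giving $\Omega((16m+8)^{m-1/2})$) for the horizontal lower bound, and the same induction on rounds using \texttt{FreezeEquation} under the noiseless assumption to confine each round's search to constant-sized replacements of summary constants. If anything, you are slightly more explicit than the paper on the odd rounds (spelling out the expansion of $D$ into $(x_{\mathrm{new}}+C)D'$, which the paper handles with a brief ``similarly''), so no corrections are needed.
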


\section{Related Work}

\textbf{AI-driven Scientific Discovery.} 
Recently AI has been highlighted to enable scientific discoveries in diverse domains~\citep{doi:10.1126/science.abj6511,jumper2021highly,sholl2022density,wang2023scientific}. 
Early work in this domain focuses on learning logic (symbolic) representations~\citep{BRADLEY2001reasoning,Bridewell2008inductive}.
Recently, learning Partial Differential Equations (PDEs) from data has also been studied extensively 
~\citep{Dzeroski1995lagrange,brunton2016sparse,PhysRevE.100.033311,doi:10.1098/rspa.2018.0305,iten2020discovering,DBLP:conf/nips/CranmerSBXCSH20,Raissi20Fluid,RAISSI2019PhysicsInformedNN,Liu21AIPoincare,nanovoid_tracking,chen2018neural}. 
In this domain, a line of works develops robots that automatically refine the hypothesis space, some with human interactions~\citep{langey1988scientificdiscovery,Valdes1994,king2004functional,king2009autosci}.
These works are quite related to ours because they also actively probe the hypothesis spaces, albeit they are in biology and chemistry. 

\medskip
\textbf{Symbolic Regression.} Symbolic regression is proven to be NP-hard~\citep{journal/tmlr/virgolin2022}, because the search space of all possible symbolic expressions is exponential with respect to the number of input variables.  
Early works in this domain are based on heuristic search~\citep{LANGLEY1981DataDiscovery,LENAT1977ubiquity}.
Genetic programming turns out to be effective in searching for good candidates of symbolic expressions~\citep{journal/2020/aifrynman,DBLP:conf/gecco/VirgolinAB19,DBLP:conf/gecco/HeLYLW22}. Reinforcement learning-based methods propose a risk-seeking policy gradient to find the expressions~\citep{DBLP:conf/iclr/PetersenLMSKK21,DBLP:conf/nips/MundhenkLGSFP21}. 
Other works use RL to adjust the probabilities of genetic operations~\citep{DBLP:journals/apin/ChenWG20}. Additionally, there are works that reduced the combinatorial search space by considering the composition of base functions, \textit{e.g.} fast function extraction~\citep{mcconaghy2011ffx} and elite bases regression~\citep{DBLP:conf/icnc/ChenLJ17}.  
In terms of the families of expressions, research efforts have been devoted to searching for polynomials with single or two variables~\citep{DBLP:journals/gpem/UyHOML11}, time series equations~\citep{DBLP:conf/icml/BalcanDSV18}, and  equations in physics~\citep{journal/2020/aifrynman}. 

Multi-variable symbolic regression is challenging because the search space increases exponentially with respect to the number of independent variables. Existing works on multi-variable regression are mainly based on pre-trained encoder-decoder methods with massive training datasets (e.g., {millions of data points}~\citep{DBLP:conf/icml/BiggioBNLP21}), and even larger-scale generative models (e.g., approximately 100 million parameters~\citep{DBLP:conf/nips/KamiennydLC22}).
Our \method algorithm is a tailored algorithm to solve multi-variable symbolic regression problems. 


\medskip
\textbf{Active Learning.}
Active learning considers querying data points actively to maximize the learning performance~\citep{DBLP:journals/ftml/Hanneke14,golovin2010near}. Recently, Haut et al. has applied active learning to better query data to accelerate the discovery process~\citep{DBLP:conf/gecco/HautBP22,DBLP:conf/gecco/HautPB23}.
Our approach is related to active learning because control variable experiments can be viewed as a way to actively collect data. 
However, in addition to active data collection, our \method builds
simple to complex models, which have not been explored in active learning. 

\medskip
\textbf{Meta-reasoning -- Thinking Fast and Slow.}
The co-existence of fast and slow cognition systems marks an interesting side of human intelligence~\citep{kahneman2011thinking,DBLP:conf/nips/AnthonyTB17,DBLP:conf/aaai/BoochFHKLLLMMRS21}. 
Our \method is motivated by this dual cognition process. 
In essence, we argue that instead of entirely relying on the brute-force way of learning with big data and heavy computation (fast thinking), careful meta-reasoning on the strategies to determine ground-truth equations (slow thinking), e.g. incrementally expanding from reduced-form equations to the full equation, may result in better outcomes.

\medskip
\textbf{Causality.}
Control variable experiments are closely related to the idea of intervention, which is commonly used to discover causal relationships \citep{simon1954spurious,langley2019scientific,glymour2014discovering,jaber2022causal,pearl2009causality}. 
However, we mainly use control variable experiments to accelerate symbolic regression, which still identifies correlations instead of the causal relationships.

\section{Experiments}

In this section, we demonstrate that \method is superior to multiple baselines in the following way:
\begin{itemize}
\item \method finds the expressions with the smallest median Normalized Mean-Square Errors (NMSE) 
among all 7 competing approaches on noiseless datasets (in Table~\ref{tab:Trigonometric-nmse-noiseless} and Table~\ref{tab:feynman-livermore2-nmse-noiseless}) and 
20 noisy benchmark datasets (in Table~\ref{tab:Trigonometric-nmse-noisy}). In particular, \cvgp attains the best empirical results on datasets with a large number of variables while \cvmt is the best on datasets with a median number of variables.
\item On simpler equations, we show that our \method takes less training time and memory, but has a higher rate of recovering the ground-truth expressions compared to baselines following the horizontal paths (in Table~\ref{tab:recovery}). 
\item We show that our \method method is consistently better than the baselines under different evaluation metrics (in Fig.~\ref{fig:evalucate-metric}),  different quantiles (25\%, 50\% and 75\%) of the NMSE metric (in Fig.~\ref{fig:Quartile-trig-nmse-noiseless-partial}), and with different amounts of Gaussian noise added to the data (in Fig.~\ref{fig:noise-level-metric}). 
\end{itemize}

\subsection{Experimental Settings} \label{sec:exp-set}
This section briefly discusses the choice of datasets, baselines, evaluation criteria, and training/testing settings.

\subsubsection{Choice of Datasets} 
We mainly consider several popular and large-scale datasets involving multiple variables that are used in prior research on symbolic regression tasks. 
\begin{itemize}
\item The Trigonometric Datasets~\citep{DBLP:arxiv/ecml/ny23}. These are a series of synthesized datasets, composed of randomly generated expressions with multiple variables. 
In this series, a dataset is labeled by the ground-truth equation that generates it. 
The ground-truth equations are multi-variable polynomials characterized by their operands and a tuple $(l_1, l_2, l_3)$. $l_1$ is the number of independent variables, $l_2$ is the number of singular terms, and $l_3$ is the number of pairwise terms.
A singular term can be an independent variable, such as $x_1$, or a unary operator on a variable, such as $\sin(x_1)$. The pairwise terms  look like $c_1 x_3 x_4$ or $c_2 \sin(x_1) \texttt{inv}(x_5)$, etc.
Here $c_1$ and $ c_2$ are randomly generated constants. 
The tuples and operands listed in different tables and charts indicate how the ground-truth expressions are generated. 
There are a series of $21$ dataset configurations.
For each configuration, there are $10$ random expressions. 

\item The Feynman Datasets~\citep{DBLP:conf/nips/UdrescuTFNWT20}. This dataset includes 120 equations from Richard Feynman's famous physics textbook. Because the difficulty of discovery is mainly determined by the number of variables $m$ in each equation, we partition the whole dataset into 6 groups according to the number of variables $m$.

\item The Livermore2 Datasets~\citep{DBLP:conf/iclr/PetersenLMSKK21} is a randomly generated multiple variable dataset including equations with $2\le m\le 7$ variables. For every variable setting, there are 25 expressions. For some of the equations, they are numerically evaluated with values like infinity and Not-a-Number. Even fitting the constant values (assuming access to ground-truth equation forms) using a gradient-based optimizer causes numeric issues. Because the purpose of this paper is not to optimize numeric solvers, we modified some expressions from the original paper~\citep{DBLP:conf/iclr/PetersenLMSKK21} to avoid numeric stability issues.
\end{itemize}
For each dataset, they are all partitioned into groups according to the number of input variables.
For the Trigonometric datasets, every group of expressions has the same set of operators. However, for the Feynman and Livermore2 datasets, every group of expressions may involve a different set of operators.
The exact form of the expression in each dataset is provided in Appendix~\ref{apx:dataset-config}.

\medskip
\textbf{Remarks on Public Available Datasets.} 
Most public datasets are black-box \citep{la2021contemporary}, containing randomly generated input and output pairs of an unknown symbolic equation. 
The purpose of our paper is to demonstrate the performance of vertical symbolic regression -- which requires us to access data in which the values of certain variables are controlled.
Since our \method requires knowing the ground-truth expressions, the Penn Machine Learning Benchmarks (PMLB) dataset~\citep{DBLP:journals/bioinformatics/RomanoLCGGCRHFM22} are not included because they do not have known ground-truth expressions.
Also, we intentionally test on benchmark sets involving many variables to highlight our approach. 
Because we consider the expressions of multiple variables, datasets that mainly consist of expressions with $\le 2$ variables are not considered, including Keijzer~\citep{DBLP:conf/eurogp/Keijzer03}, Korns~\citep{DBLP:conf/gptp/Korns14} and Constant~\citep{DBLP:conf/iclr/PetersenLMSKK21} datasets.

\medskip
\textbf{Noiseless and Noisy Settings.} The noiseless setting is used to determine if the algorithm is able to find the correct expression under the most ideal setting. The noisy setting is a simulation of the real world where experimental outcomes are measured with rounding errors and human mistakes. These datasets are used to determine the robustness of the symbolic regression algorithms. For noiseless datasets, the output $y_i$ is exactly the evaluation of the ground-truth expression $\phi(\mathbf{x}_i)$. For noisy datasets, the output is further perturbed by the noise of zero means and a given standard deviation: $\phi(\mathbf{x}_i)+\varepsilon_i$, where the noise $\varepsilon_i\sim\mathcal{N}(0,0.1)$ is drawn from the Gaussian distribution with a mean of $0$ and a standard deviation of $0.1$.

\subsubsection{Choice of Baselines}
We evaluate these symbolic regression methods:
1) Genetic programming-based approaches, including GP and Eureqa. 
2) A Monte Carlo Tree Search-based approach, \textit{i.e.}, MCTS.
3) Reinforcement-learning-based approaches, including DSR, PQT and GPMeld.  
The detailed description of each method is as follows:
\begin{itemize}
    \item Genetic Programming (GP)~\citep{DEAP_JMLR2012} maintains a population of candidate symbolic expressions, in which this population \textit{evolves} between generations. In each generation, candidate expressions undergo \textit{mutation} with probability $P_{mu}$ and \textit{crossover} with probability $P_{ma}$. Then in the \textit{selection} step, expressions with the highest fitness scores (measured by the difference between the ground truth and candidate expression evaluation) are selected as the candidates for the next generation, together with a few randomly chosen expressions, to maintain diversity. After several generations, expressions with high fitness scores, \textit{i.e.}, those expressions that fit the data well survive in the pool of candidate solutions. The best expressions in all generations are recorded as {hall-of-fame} solutions.  
    \item {Eureqa~\citep{DBLP:journals/gpem/Dubcakova11} is the current best commercial software based on evolutionary search algorithms. Eureqa works by uploading the dataset $\mathcal{D}$ and the set of operators as a configuration file to its commercial server. Computation is performed on its commercial server and only the discovered expression will be returned after several hours.}
    \item MCTS~\citep{DBLP:conf/iclr/Sun0W023} that uses Monte Carlo Tree Search to find the best expressions, that is defined with context-free grammar in section~\ref{sec:cv-mcts}. For historical reasons, This method got completely different names in a line of works~\citep{DBLP:conf/icml/TodorovskiD97,DBLP:conf/dis/GanzertGSK10,DBLP:journals/kbs/BrenceTD21,DBLP:conf/iclr/Sun0W023,DBLP:conf/icml/KamiennyLLV23}, despite being implemented in ways similar to each other. Our implementation, as a representation of all these methods, is closest to \cite{DBLP:conf/iclr/Sun0W023}. 
    \item Deep Symbolic Regression (DSR)~\citep{DBLP:conf/iclr/PetersenLMSKK21} uses a combination of recurrent neural network (RNN)  and reinforcement learning for symbolic regression. The RNN generates possible candidate expressions, and is trained with a risk-seeking policy gradient objective to generate better expressions.
    \item Priority queue training (PQT)~\citep{DBLP:journals/corr/abs-1801-03526} also uses the RNN similar to DSR for generating candidate expressions. However, the RNN is trained with a supervised learning objective over a data batch sampled from a maximum reward priority queue, focusing on optimizing the best-predicted expression.
    \item Vanilla Policy Gradient (VPG)~\citep{DBLP:journals/ml/Williams92} is similar to DSR method for the RNN part. The difference is that VPG uses the classic REINFORCE method for computing the policy gradient objective.
    
    \item Neural-Guided Genetic Programming Population Seeding (GPMeld)~\citep{DBLP:conf/nips/MundhenkLGSFP21} uses the RNN to generate candidate expressions, and these candidate expressions are improved by a genetic programming (GP) algorithm. 
\end{itemize}

\subsection{Evaluation Criteria}
In terms of the evaluation criteria,  we consider the following:
\begin{itemize}
    \item Goodness-of-fit metric. The median (50\%) of the NMSE values to fit all the expressions in a dataset is reported. 
We choose to report median values instead of means due to outliers (see box plots in Fig.~\ref{fig:Quartile-trig-nmse-noiseless-partial}). This is a common practice for combinatorial optimization problems. We further report the performance on other metrics (see equation~\ref{eq:loss-function}) in case studies Fig.~\ref{fig:evalucate-metric}. We set a general total time to be 48 hours to ensure all programs are finished and well-trained for this metric.
\item The total running time of each learning algorithm, which is the duration taken for each program to uncover a promising expression. It is worth noting that this calculation incorporates the time spent on data oracle queries and optimizes the open constants in each expression.
\item Memory consumption of the learning algorithms, measuring the peak memory consumption on the set of optimal expressions maintained by the learning algorithm, gradient-based optimizer for open constants, training data size, and deep network size (if used). The memory of the whole program initialization and the data Oracle initialization are excluded for comparison.
\end{itemize}

For the goodness-of-fit metric, given a testing dataset $D_{\text{test}}=\{(\mathbf{x}_{i},y_i)\}_{i=1}^n$ generated from the ground-truth expression, we measure the goodness-of-fit of a predicted expression ${\phi}$, by evaluating the mean-squared-error (MSE), normalized-mean-squared-error (NMSE),  root Mean-squared error (RMSE), and normalized root Mean-squared error (NRMSE):
\begin{equation}\label{eq:loss-function}
\begin{aligned}
\text{MSE}(\phi)&=\frac{1}{n}\sum_{i=1}^n(y_{i}-{\phi}(\mathbf{x}_{i}))^2,  \\
\text{NMSE}(\phi)&=\frac{\frac{1}{n}\sum_{i=1}^n(y_{i}-{\phi}(\mathbf{x}_{i}))^2}{\sigma_y^2} \\
\text{RMSE}(\phi)&=\sqrt{\frac{1}{n}\sum_{i=1}^n(y_{i}-{\phi}(\mathbf{x}_{i}))^2},\\ 
\text{NRMSE}(\phi)&=\frac{1}{\sigma_y} \sqrt{\frac{1}{n}\sum_{i=1}^n(y_{i}-{\phi}(\mathbf{x}_{i}))^2}
\end{aligned}
\end{equation}
where the empirical variance $\sigma_y$ is computed as $\sqrt{\frac{1}{n}\sum_{i=1}^n \left(y_i-\frac{1}{n}\sum_{i=1}^n y_i\right)^2}$. Additionally,  the Inverse normalized Mean-squared error (InvNMSE), and Inverse normalized root Mean-squared error (InvNRMSE) are defined as follows:
\begin{align*}
\text{InvNMSE}(\phi)&=\frac{1}{1+\text{NMSE}(\phi)},\\
\text{InvNRMSE}(\phi)&=\frac{1}{1+\text{NRMSE}(\phi)}.
\end{align*}
We use the NMSE as the main criterion for comparison in the experiments and present the results on the remaining metrics in the case studies. 
The main reason is that the NMSE is less impacted by the output range. The output ranges of expression are dramatically different from each other, making it difficult to present results in a uniform manner if we use other metrics.
Prior work~\citep{DBLP:conf/iclr/PetersenLMSKK21} further proposed coefficient of determination ($R^2$)-based Accuracy over a group of expressions in the dataset, which is defined as: given a threshold $\tau$ (such as $\tau=0.999$), for a dataset containing fitting tasks of $N$ expressions, the algorithm finds a group of best expressions $[\phi_1,\ldots, \phi_N]$ correspondingly. The $R^2$-based accuracy is computed as follows:
\begin{align*}
\text{Accuracy}(R^2\ge \tau)&=\frac{1}{N}\sum_{i=1}^{N}\mathbf{1}(R^2(\phi_i)\ge \tau),
\end{align*}
where $R^2(\phi_i)=1-\frac{\frac{1}{n}\sum_{i=1}^n(y_{i}-{\phi}(\mathbf{x}_{i}))^2}{\sigma_y^2}$ and $\mathbf{1}(\cdot)$ is an indicator function that outputs $1$ when the $R^2(\phi_i)$ exceeds the threshold $\tau$.
Note that the coefficient of determination ($R^2$) metric~\citep{nagelkerke1991note} is equal to $1-\text{NMSE}(\phi)$.

\subsection{Training and Testing Settings} \label{sec:experiment-test-setting}
We leave detailed descriptions of the configurations of our methods and baselines in Appendix~\ref{apx:impelemnt} and only mention a few implementation notes here. 
We implemented the GP, \cvgp, \mcts, and \cvmt. 
They use a data oracle, which returns (noisy) observations of the ground-truth equation when queried with inputs. 
We cannot implement the same oracle for other baselines because of code complexity and/or no available code. 
For PQT, VPG, DSR, and GPMeld, we generate a large fixed-size dataset before training. During training, the method samples a small batch of data $\{(\mathbf{x}_i, y_i)\}_{i=1}^{m}$ for a step of mini-batch gradient descent.   See ``Training set size'' and ``Batch size'' in Appendix Table~\ref{tab:baseline-hyper-config} for empirical configurations.
To ensure fairness, the sizes of the training datasets we use for these baselines are larger than the total number of data points accessed in the full execution of those algorithms. 
In other words, their access to data would have no difference if the same oracle for GP, \cvgp, \mcts and \cvmt has been implemented for them because it does not affect the executions whether the data is generated ahead of the execution or on the fly.

To ensure the fairness of the testing, every learning algorithm outputs the most probable symbolic expression. We apply the same testing set ${D}_{\text{test}}$ to compute the goodness-of-fit measure in Equation~\eqref{eq:loss-function}. The reported NMSE scores in all the charts and tables are based on separately generated data that have never been used in training.

\begin{table}[!t]
    \centering
    \caption{On \textit{noiseless} Trigonometric datasets, median (50\%-quartile) of NMSE values of the best-predicted expressions found by all the algorithms. The 3-tuples at the top $(\cdot,\cdot,\cdot)$ indicate the number of free variables, singular terms, and cross terms in the ground-truth expressions generating the dataset. $O_p$ stands for the set of operators. Our \cvgp and \cvmt are the best (attains the smallest NMSE values) among all approaches.}
    \label{tab:Trigonometric-nmse-noiseless}
    \begin{tabular}{l|rrrrrrr}
     \multicolumn{8}{c}{\textbf{(a)} Trigonometric datasets containing $O_p=\{\texttt{inv},+,-,\times\}$} \\
\toprule
& (2,1,1) & (3,2,2) & (4,4,6) & (5,5,5) & (5,5,8) & (6,6,8) & (6,6,10) \\   \midrule
\cvgp(ours) & $<$\textbf{1E-6} & 1E-3 & {0.008} & {0.011} & {0.007} & {0.044} & {0.012} \\
\gp & 2E-3 & 0.015 & 0.012 & 0.025 & 0.010 & 0.058 & 0.381 \\
 Eureqa & $<$\textbf{1E-6} & $<$\textbf{1E-6} & 1.191 & 0.996 & 1.002 & 1.005 & 1.764 \\
 \hline
\cvmt(ours)& $<$\textbf{1E-6} & $<$\textbf{1E-6}  & $<$\textbf{1E-6} & \textbf{6.8E-6} & \textbf{9.3E-5} & \textbf{9.2E-5} & \textbf{3.1E-5}\\
\mcts & $<$\textbf{1E-6}  &  0.059   &  0.495 & 0.466 & 0.667   &  0.661 & 0.590\\
\hline
DSR & $<$\textbf{1E-6} & 1.004 & 1.006 & 1.048 & 1.403 & 1.963 & 1.021 \\
PQT & $<$\textbf{1E-6} & 0.874 & 1.006 & 1.048 & 1.530 & 4.212 & 1.006 \\
VPG & $<$\textbf{1E-6}& 0.978 & 1.221 & 1.401 & 4.133 & 4.425 & 1.003 \\
GPMeld & $<$\textbf{1E-6} & 1.062 & 1.127 & 1.008 & 1.386 & 15.58 & 1.022 \\
\midrule
    \multicolumn{8}{c}{\textbf{(b)} Trigonometric datasets containing  $O_p=\{\sin, \cos,+,-,\times\}$} \\
    \toprule
& (2,1,1) & (3,2,2) & (4,4,6) & (5,5,5) & (5,5,8) & (6,6,8) & (6,6,10) \\   \midrule
\cvgp (ours)& 0.005 & 0.028 & 0.086 & 0.014 & 0.066 & {0.066} & {0.104} \\
GP & 7{E-}4 & 0.023 & 0.044 & 0.063 & 0.102 & 0.127 & 0.159 \\
Eureqa & $<$\textbf{1E-6} & $<$\textbf{1E-6} & 0.024 & 0.158 & 0.284 & 0.433 & 0.910 \\
\hline
\cvmt(ours)& $<$\textbf{1E-6} & $<$\textbf{1E-6}  & \textbf{0.006} & \textbf{0.009} & \textbf{0.011} & \textbf{0.014} & \textbf{0.076}\\
\mcts & 0.006 & 0.033 & 0.144 & 0.147 & 0.307 & 0.391 & 0.472 \\
\hline
DSR & $<$\textbf{1E-6} & 0.008 & 2.815 & 2.558 & 2.535 & 0.936 & 6.121 \\
PQT & 0.020 & 0.161 & 2.381 & 2.168 & 2.482 & 0.983 & 5.750 \\
VPG & 0.030 & 0.277 & 2.990 & 1.903 & 2.440 & 0.900 & 3.857 \\
GPMeld & $<$1{E-}6 & 0.112 & 1.670 & 1.501 & 2.422 & 0.964 & 7.393 \\
\midrule
    
 \multicolumn{8}{c}{\textbf{(c)} Trigonometric datasets containing $O_p=\{\sin, \cos,\texttt{inv},+,-,\times\}$} \\
\toprule 
& (2,1,1) & (3,2,2) & (4,4,6) & (5,5,5) & (5,5,8) & (6,6,8) & (6,6,10) \\   \midrule
\cvgp(ours) & $<$\textbf{1E-6} & 0.039 & 0.015 & 0.038 & \textbf{0.050} & \textbf{0.029} & \textbf{0.018} \\
GP & $<$\textbf{1{E-}6} & 0.043 & 0.042 & 0.197 & 0.111 & 0.091 & 0.087 \\
Eureqa & $<$\textbf{1{E-}6} & $<$\textbf{1{E-}6} & 0.259 & 0.901 & 1.006 & 1.002 & 1.001 \\
\hline
\cvmt(ours)& $<$\textbf{1E{-}6} & $<$\textbf{1E{-}6} & \textbf{6E\mbox{-}3} & \textbf{0.007} & {0.069} &  0.226 &  0.219 \\
\mcts & 0.010 & 0.119 & 0.330 & 0.482 & 0.453 & 0.476 & 0.484 \\
\hline
DSR & 0.439 & 0.233 & 1.040 & 3.892 & 0.782 & 1.605 & 2.083 \\
PQT & 0.485 & 0.855 & 1.039 & 4.311 & 1.217 & 1.718 & 1.797 \\
VPG & 0.008 & 0.227 & 1.049 & 5.542 & 0.572 & 4.691 & 1.888 \\
GPMeld & $<$\textbf{1E-6}  & 0.984 & 1.886 & 9.553 & 1.142 & 1.398 & 2.590 \\
\bottomrule
    \end{tabular}
\end{table}

\begin{table}[!t]
    \centering
    \caption{On \textit{noiseless} Feynman and Livermore2 datasets, median (50\% quantile) NMSE values of the best-predicted expressions found by all the algorithms. The full dataset is partitioned by the number of variables ($m$) contained in the ground-truth expression.  Our approaches \cvgp and \cvmt attain the smallest NMSEs compared to all baselines. }
    \label{tab:feynman-livermore2-nmse-noiseless}
    \begin{tabular}{l|rrrrcr}
    \toprule
    & \multicolumn{5}{c}{\textbf{(a) Feynman datasets.}} \\
        &  $m=2$  & $m=3$ & $m=4$ & $m=5$ & $6\le m \le 8$     \\ \midrule
\cvgp (ours)&   $<$ 1E-6 & $<$ 1E-6 & $<$ 1E-6  & $0.998$ &  {$1.006$} \\
GP  & $<$ 1E-6 &  $<$ 1E-6  &  0.936 & $1.081$  & {1.005}	 \\
   Eureqa & $<$ 1E-6 & $<$ 1E-6 & 0.026 & 0.434 & {0.80}\\
\hline
\cvmt  (ours)&  \textbf{$<$ 1E-6} & \textbf{$<$ 1E-6} & \textbf{$<$ 1E-6} & \textbf{0.065} & \textbf{0.144}\\
\mcts & $<$ 1E-6  & 8.1E-3& $1.003$ & 0.181 & {1.023}\\
\hline
DSR & 2.28\mbox{E-}4 &   0.222   &   0.216   &   0.976   &  0.908  \\
PQT  &   3.20\mbox{E-}4   &   0.191   &   0.172   &   1.003   &  1.383   \\
VPG  &   2.74\mbox{E-}4   &   0.155   &   0.188   &   1.006   & 1.435  \\
GPMeld  &   3.71\mbox{E-}4   &   0.182   &   0.177   &   0.941   &  1.366   \\
\midrule
    \midrule
     & \multicolumn{5}{c}{\textbf{(b) Livermore2 datasets.}} \\
        & $m=2$ & $m=3$ & $m=4$ & $m=5$ & $m=6$  &  $m=7$  \\
         \midrule
        \cvgp  (ours)& \textbf{$<$1E\mbox{-}6} & 0.057 & 0.013  &0.275  & 0.117 & 0.085\\
        GP &  \textbf{$<$1E\mbox{-}6 } & 0.068 &0.059 &0.331   &0.256 & 0.238 \\
        Eureqa & 0.991 & 0.051 &0.508   &0.083   &\textbf{0.026}  &  0.558 \\
        \hline
        \cvmt  (ours)&  6.9E\mbox{-}6 & 0.020 & \textbf{0.012} & 0.071 & 0.191 & 0.104  \\
        \mcts & 9.5E-3  & 0.058 &0.054 &0.181 &0.229 &0.103 \\
        \hline
        DSR &1.3E-5 & 0.012  & 0.030   & 0.050   & 0.230 & \textbf{0.073} \\
        PQT & 3.9E-6 & 0.017  & 0.042 & 0.074 & 0.170  & 0.074\\
        VPG & 5.9E-6  & 0.031 & 0.037   & 0.093   & 0.206 & 0.078\\
        GPMeld  & \textbf{$<$ 1E-6} & \textbf{0.002}  &0.029   &\textbf{0.049}  &0.144 & 0.104\\
        \bottomrule
    \end{tabular}%
\end{table}

\subsection{Goodness-of-fit Comparisons}
\textbf{Results under Noiseless Settings.}
 Under the noiseless setting, we evaluate the median NMSE metric of all the algorithms over the Trigonometric dataset in Table~\ref{tab:Trigonometric-nmse-noiseless}, the Feynman dataset in Table~\ref{tab:feynman-livermore2-nmse-noiseless}(a) and the Livermore2 dataset in Table~\ref{tab:feynman-livermore2-nmse-noiseless}(b).

In Table~\ref{tab:Trigonometric-nmse-noiseless}, we find that \cvgp is better than GP and \cvmt is better than \mcts in terms of the median NMSE metric, showing that the proposed \method can sufficiently improve the current baselines based on horizontal discovery. Also, \cvgp and \cvmt can find better expressions than deep reinforcement learning-based baselines (i.e., DSR, PQT, VPG, and GPMeld). 
In Table~\ref{tab:Trigonometric-nmse-noiseless}(a,b), the \cvmt outperforms the \cvgp because the number of allowed mathematical operators is median. As a result,  the ground-truth expression only requires relatively shallow tree searches from the best expressions found in the previous vertical discovery step to reach a good candidate equation. For expressions with more than 6 variables and a large set of operators (in Table~\ref{tab:Trigonometric-nmse-noiseless}(c)), \cvmt needs much deeper expansions in the search tree to find a good candidate expression. This results in inferior performance compared with \cvgp.
Overall, our \method attains the smallest median (50\%) NMSE values among all the baselines mentioned in Section~\ref{sec:exp-set}, when evaluated on noiseless datasets (Table~\ref{tab:Trigonometric-nmse-noiseless}). 
This shows that our proposed methods based on vertical discovery can handle symbolic regression problems with many independent variables better than the current state-of-the-art algorithms in this area.

Table~\ref{tab:feynman-livermore2-nmse-noiseless}(a) collects the NMSE values for the Feynman dataset, the expressions of which are from real-world physics. We can find that \cvmt attains results better than the rest baselines.

The results on the Livermore2 dataset are presented in Table~\ref{tab:feynman-livermore2-nmse-noiseless}(b).
We can find that the results between RL-based methods and the GP-based methods are mixed. The reason is that these datasets offer a larger size of mathematical operators than the other two datasets, GP-based methods need many more random mutations and a larger set size of GP pool to find good candidate expressions. 
For the MCTS  and our \cvmt method, the corresponding search tree has many more children due to the larger size of mathematical operators. 
Nevertheless, we would like to point out that vertical discovery approaches (e.g., \cvgp and \cvmt) still outperform their horizontal counterparts (e.g., GP and MCTS).

\medskip
\textbf{Noisy Settings.} 
We also conduct experiments on the Trigonometric dataset for noisy settings, to benchmark the robustness of the learning algorithm. The result is summarized in Table~\ref{tab:Trigonometric-nmse-noisy}. The Gaussian noise has a zero mean and a standard deviation of $0.1$ is added. In all except for one dataset, our approaches \cvgp and \cvmt attain the smallest NMSEs compared to all baselines. 

\begin{table}[!t]
    \centering
     \caption{On \textit{noisy} Trigonometric datasets, median (50\% quantile) NMSE values of the best expressions found by all the algorithms. The Gaussian noise has zero mean and a standard deviation of $0.1$ is added. In all except for one dataset, our approaches \cvgp and \cvmt attain the smallest NMSEs compared to all baselines. }
    \label{tab:Trigonometric-nmse-noisy}
    \begin{tabular}{l|rrrrrrr}
     \multicolumn{8}{c}{\textbf{(a) Trigonometric datasets containing $O_p=\{\texttt{inv},+,-,\times\}$}} \\
    \toprule
        & $(2,1,1)$ & $(3,2,2)$ & $(4,4,6)$ & $(5, 5, 5)$ & $(5, 5, 8)$ & $(6, 6, 8)$ & $(6, 6, 10)$ \\ \midrule
\cvgp  (ours) &${0.011}$ & $0.020$  &${0.036}$  &$0.076$  &$\textbf{0.061}$  &$\textbf{0.098}$  &$\textbf{0.055}$  \\
GP  &${0.02}$& $0.031$   &${0.088}$  &$0.126$  &$0.118$  &$0.144$ & $0.097$ \\
\hline
\cvmt  (ours)& $\textbf{0.001}$ & $\textbf{1.7E\mbox{-}4}$ & $\textbf{0.011}$ & $\textbf{0.071}$  & $0.121$ & $0.153$ & $0.117$\\
\texttt{MCTS} &  $\textbf{0.001}$ & $2.6E\mbox{-}4$ & $0.191$ & $0.240 $ & $0.350$ & $0.520$ & $0.449$\\
\hline
DSR  &$0.03$  & $0.58$  &$1.163$  &$1.028$  &$1.004$  &$1.006$  &$1.003$  \\
PQT  &$0.03$  & $0.43$  &$1.016$  &$1.983$  &$1.005$  &$1.006$  &$1.005$  \\
VPG  &$0.04$  & $0.62$  &$1.09$  &$1.08$  &$1.00$  &$1.01$  &$1.00$  \\
GPMeld  &$0.39$& $0.55$  &$1.058$  &$1.479$  &$1.108$  &$1.035$  &$1.021$  \\
    \midrule
     \multicolumn{8}{c}{\textbf{(b) Trigonometric datasets containing $O_p=\{\sin, \cos,+,-,\times\}$}} \\
 \toprule   
   &  $(2,1,1)$ & $(3,2,2)$ & $(4,4,6)$ & $(5,5,5)$ & $(5,5,8)$ & $(6,6,8)$ & $(6,6,10)$ \\\midrule
\cvgp  (ours) &${0.05}$  &${0.10}$  &${0.08}$  &$\textbf{0.07}$  &$\textbf{0.11}$  &${0.17}$  &$\textbf{0.16}$  \\
GP  &$0.10$  &$0.11$  &$0.12$  &$0.09$  &$0.12$  &$0.19$  &$0.31$  \\
\hline
\cvmt  (ours)& $\textbf{0.005}$ & $\textbf{0.012}$  & $\textbf{0.051}$ & $0.092$ & $0.148$ & $\textbf{0.121}$ & $0.340$\\
\texttt{MCTS} & $0.015$  & $0.007$ & $0.138$ & $0.150$ & $0.252$ & $0.244$ & $0.494$\\
\hline
DSR  &$0.07$  &$0.35$  &$7.06$  &$32.5$  &$195.2$  &$1.75$  &$11.68$  \\
PQT  &$0.07$  &$0.35$  &$5.09$  &$36.80$  &$449.8$  &$4.89$  &$5.67$  \\
VPG  &$0.09$  &$0.44$  &$2.46$  &$14.44$  &$206.1$  &$2.40$  &$7.40$  \\
GPMeld  &$0.07$  &$0.102$  &$2.225$  &$28.440$  &$363.79$  &$1.478$  &$11.513$  \\
   \midrule
  \multicolumn{8}{c}{\textbf{(c) Trigonometric datasets containing $O_p=\{\sin, \cos,\texttt{inv},+,-,\times\}$}} \\
\toprule
&   $(2,1,1)$ & $(3,2,2)$ & $(4,4,6)$ & $(5,5,5)$ & $(5,5,8)$ & $(6,6,8)$ & $(6,6,10)$ \\ \midrule
\cvgp  (ours)&$0.24$  &$0.05$  &${0.14}$  &$\textbf{0.16}$  &$\textbf{0.12}$  &${0.21}$  &${0.14}$  \\
GP &$0.10$  &${0.02}$  &$0.24$  &$0.20$  &$0.17$  &${0.21}$  &$\textbf{0.07}$  \\
\hline
\cvmt  (ours)& $\textbf{2E-3}$ & $\textbf{0.011}$ & $\textbf{0.128}$ & $0.180$ & $0.139$& $\textbf{0.162}$ & $0.251$\\
\texttt{MCTS} &  $0.003$ & $0.025$ & $0.263$ & $0.434$ & $0.393$ & $0.410$ & $0.528$\\
\hline
DSR &$0.44$  &$0.66$  &$1.031$  &$1.098$  &$1.009$  &$1.003$  &${1.654}$  \\
PQT &$0.76$  &$1.002$  &$1.297$  &$1.018$  &$1.017$  &$1.047$  &$1.027$  \\
VPG &$0.21$  &$0.969$  &$1.051$  &$1.012$  &$1.007$  &$1.059$  &$1.009$  \\
GPMeld & $\textbf{2\text{E-}3}$  &$0.413$  &$1.093$  &$1.036$  &$1.070$  &$1.029$  &$1.445$  \\
 \bottomrule
    \end{tabular}
\end{table}

\subsection{Case Studies}
This section studies the running time and memory consumption of each algorithm under suitable hyper-parameter configurations. We also study several other important topics, including 1) the choice of optimizers, 2) the distribution of full quartiles of every method, 2) the impact of the noise rate on the learning algorithm, and 3) the discovery rate of the ground-truth expressions.

\begin{table}[!t]
    \centering
      \caption{Recovery quality comparison. Our \method has a higher rate of recovering the ground-truth expressions compared to horizontal discovery on 3 simple datasets. Results are collected with a time limit of 48 hours. Our proposed \cvgp and \cvmt recover the highest percentage of ground-truth equations, run the fastest, and require the least amount of memory.}
    \label{tab:recovery}
    \begin{tabular}{l|cc|cc|cc}
    \multicolumn{7}{c}{\textbf{(a) Trigonometric datasets containing $O_p=\{\texttt{inv},+,-,\times\}$}.} \\
    \toprule
    & \multicolumn{2}{c|}{{\textbf{Accuracy}}} & \multicolumn{2}{|c|}{\multirow{2}{*}{\textbf{Total Time  (Mins)} $\downarrow$}} & \multicolumn{2}{|c}{\multirow{2}{*}{\textbf{Peak Memory (MB)} $\downarrow$}} \\
    & \multicolumn{2}{c|}{{\textbf{$(R^2\ge 0.999) \uparrow$}}} &  &\\
    & $(2,1,1)$ & $(3, 2, 2) $ & $(2,1,1)$ & $(3, 2, 2) $  & $(2,1,1)$ & $(3, 2, 2) $  \\   \midrule
    \cvgp(ours) & ${60\%}$  & $70\%$ & 8 & 10 &36 & \textbf{40}\\
    GP & $40\%$ & $40\%$ & 2 & 21 & 42 & 49\\
    \hline
    \cvmt (ours) &  \textbf{100\%} & \textbf{70\%}  & \textbf{2}  & \textbf{5} & \textbf{21} & 47  \\
    \mcts & $100\%$ & $40\%$  & 5 & 38 &   50 &  182\\
    \midrule
    \multicolumn{7}{c}{\textbf{(b) Trigonometric datasets containing $O_p=\{\sin,\cos,+,-,\times\}$}} \\
    \toprule
     & $(2,1,1)$ & $(3, 2, 2) $ & $(2,1,1)$ & $(3, 2, 2) $  & $(2,1,1)$ & $(3, 2, 2) $  \\   \midrule
    \cvgp(ours) & ${60\%}$ & 50\% & 3 & 18 &  36& \textbf{37}\\
    GP & $50\%$ & $40\%$ & 3 & 25& 40 &  45 \\
    \hline
    \cvmt(ours) & \textbf{100\%} & \textbf{100\%}   & \textbf{2} & \textbf{8} & \textbf{25} & 61\\
    \mcts & $10\%$ &   $20\%$  & 23 & 249 &  141 &  191\\
    \midrule
    \multicolumn{7}{c}{\textbf{(c) Trigonometric datasets containing $O_p=\{\sin,\cos,\texttt{inv},+,-,\times\}$}} \\
    \toprule
     & $(2,1,1)$ & $(3, 2, 2) $ & $(2,1,1)$ & $(3, 2, 2) $  & $(2,1,1)$ & $(3, 2, 2) $  \\   \midrule
    \cvgp(ours) & ${60\%}$ & 20\% & 8 & 28  & 37 & \textbf{36}\\
    GP & $60\%$ & 30\% & 13 & 11 & 42 & 45 \\
    \hline
    \cvmt(ours) & \textbf{100\%} & \textbf{70\%}  & \textbf{3} & \textbf{17} & \textbf{26 }& {83}\\
    \mcts & $ 0\%$ & 0\% & 50 & 287 & 144 & 206\\
\bottomrule
    \end{tabular}
\end{table}

\medskip
\textbf{Recovery Quality Comparison.}
We consider the task of discovering exact expressions in less challenging data sets. For a group of datasets, we
claim that a symbolic regressor recovers
the ground-truth equation if the $R^2$ values of the fitted equation are beyond 0.999. 
We hand-checked the equations found. They are basically ground-truth equations; many times in equivalent representations, e.g., use $x + x$ in placement of $2x$, etc. 
In the first column of Table \ref{tab:recovery}, we count the percentage of equations in each symbolic regressor that exceeds this 0.999 $R^2$ threshold. In other words, this is the percentage of equations that each approach is able to recover exactly. 
The computation time and memory usage are listed in the second and third columns.
For all the methods, we set the time limit to be 48 hours, the optimizer to be BFGS, and the maximum iterations of the optimizer to be 500. 
Table~\ref{tab:recovery}, our \method greatly improves the recovery quality.

Our \cvgp has a higher chance to recover ground-truth expressions than GP. Also, it requires less memory and time. 
Through hand checking, the expressions found by GP are longer and more complex than \cvgp. Hence, they require larger memory and more time for the BFGS optimizer to search for the constant values.
This serves as good empirical evidence that vertical symbolic regression reduces 
the search space of candidate expressions.

Compared to \mcts, our \cvmt methods use less than two hours to discover more than 70\% ground-truth equations on the $(3,2,2)$ dataset, while \mcts takes 5 hours discovering 0\% of ground-truth equations. 
Due to the vertical discovery paths, our \cvmt maintains much simpler equations than \mcts, resulting in less memory usage. Empirically, we found that the search for constant values is the main bottleneck for all these approaches. Hence, simpler equations also translate to much less running time. 

We found many algorithms in our comparisons tend to give equivalent, but more complex representations of one equation, e.g., using $x + x + x$ in place of $3 x$. 
We do not have a good tool to simplify 
these equations into a canonical form. 
In this case, we omit the metric of 
normalized tree edit distance. 
For the challenging benchmark that we consider (benchmark involving a lot of variables, e.g., shown in tables \ref{tab:Trigonometric-nmse-noiseless} and \ref{tab:Trigonometric-nmse-noisy}), the equations found by all approaches tend to have big edit distances, hence not very informative. 
We suspect this is because the equations were not converted to a canonical form. 


\medskip
\textbf{Empirical Running Time Analysis.} 
We further show the running time analysis in Fig.~\ref{fig:quartile-time-partial}. With more variables in the expression, the computational time to process these input data quickly scale up. Our proposed \cvgp and \cvmt have comparable running times as other approaches.

\medskip
\textbf{Impact of Different Optimizers.} 
Here we study the impact of using different optimizers in the search for constant values given the form of an equation. 
Notice such optimizations are often non-convex. 
We consider the following optimizers:  Conjugate gradient (CG)~\citep{fletcher1964function} Nelder-Mead~\citep{DBLP:journals/coap/GaoH12}, BFGS~\citep{bfgs}, Basin Hopping~\citep{wales1997global}, SHGO~\citep{DBLP:journals/jgo/EndresSF18}, Dual Annealing~\citep{tsallis1996generalized} and Dividing Rectangle (Direct)~\citep{nicholas2014dividing}.
The list of local and global optimizers shown in Fig.~\ref{tab:optimizer} are from Scipy library\footnote{\url{https://docs.scipy.org/doc/scipy/reference/optimize.html}}.
Empirically, we observe sometimes for a structurally correct equation (equivalent in structure to the ground-truth equation), an optimizer may find the values of open constants with large fitting errors. 
This fact places this equation in low ranking in the whole population. This is tragic because such expressions will not be considered after several rounds of GP or MCTS operations. 

\begin{figure}[!t]
    \centering
     \includegraphics[width=0.41\linewidth]{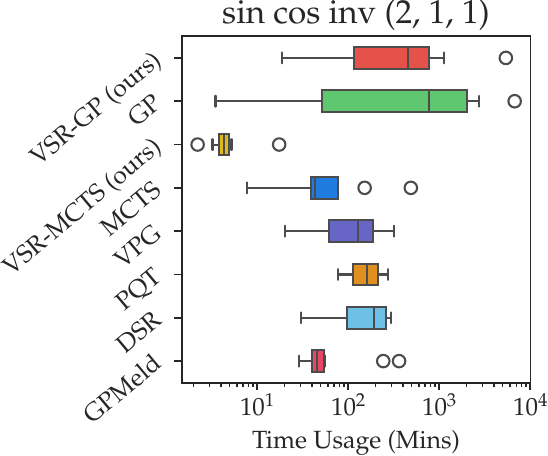}
     \includegraphics[width=0.27\linewidth]{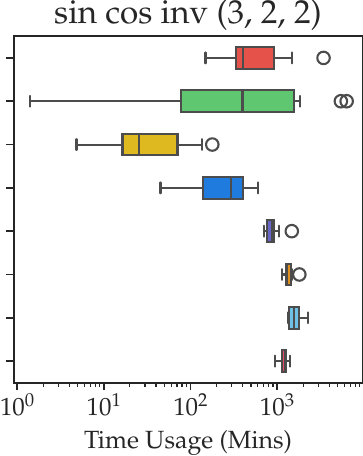}
     \includegraphics[width=0.27\linewidth]{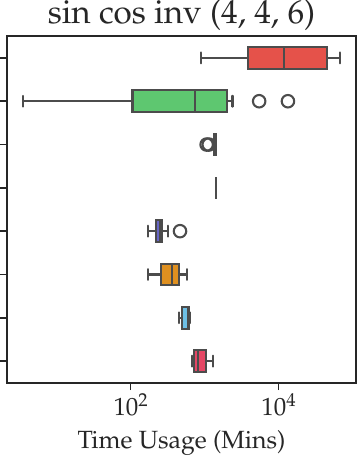}
    \caption{On \textit{noiseless} selected Trigonometric datasets, quartiles of the total running time of all the methods.}
    \label{fig:quartile-time-partial}
\end{figure}

We summarize the experimental result in Fig.~\ref{tab:optimizer}. In general, the list of global optimizers (SHGO, Direct, Basin-Hopping, and  Dual-Annealing) fits better for the open constants than the list of local optimizers but they take significantly more CPU time and memory. 

\begin{figure}
    \centering
    \includegraphics[width=0.49\linewidth]{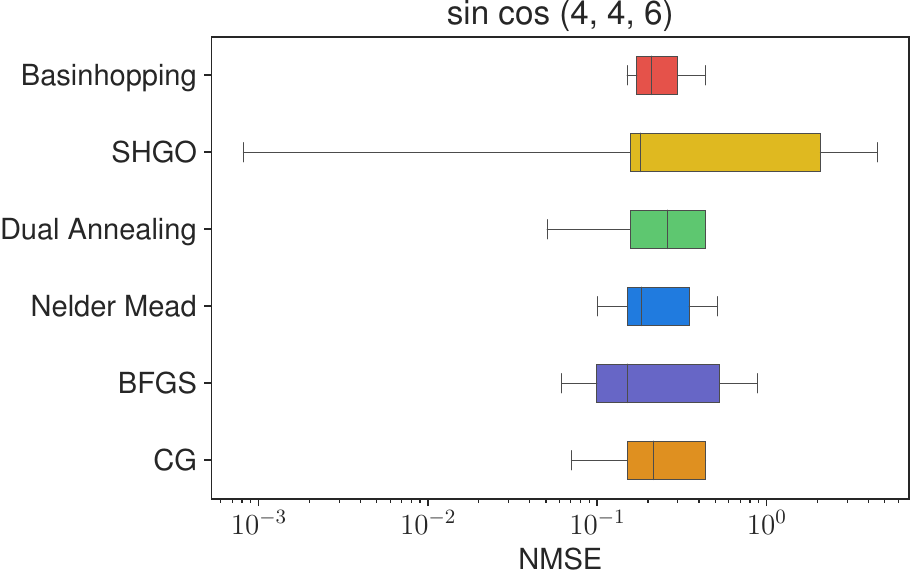}
    \hfill
    \includegraphics[width=0.49\linewidth]{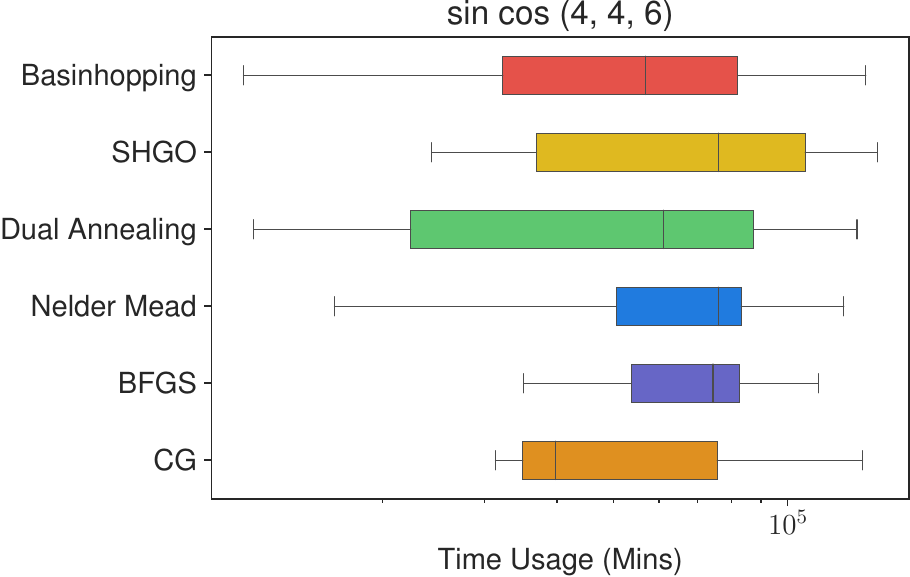}
    \caption{Impact of optimizers on finding the values of open constants for non-convex expressions. Over 10 randomly generated expressions involving 4 variables,  SHGO can find better solutions (in terms of NMSE metric) than local optimizers (including Nelder-Mead, BFGS, CG), while the time taken by SHGO is higher than local optimizers. }
    \label{tab:optimizer}
\end{figure}

\medskip
\textbf{Impact of Noise Levels.}
 In real scientific experiments, the datasets often contain noises.
We add Gaussian noise $\mathcal{N}(0, \sigma^2)$ to the output $y$ in the dataset and control the noise rate by varying the values of $\sigma$ in $\{0.02,0.04,0.08,0.1, 0.12, 0.14\}$.
Fig.~\ref{fig:noise-level-metric} shows the box plots in NMSE values for the expressions
found by \cvgp and GP over benchmark datasets with different noise levels. Our
\cvgp is consistently the best regardless of the evaluation metrics and noise levels.

\begin{figure}[!t]
    \centering
    \includegraphics[width=0.4\linewidth]{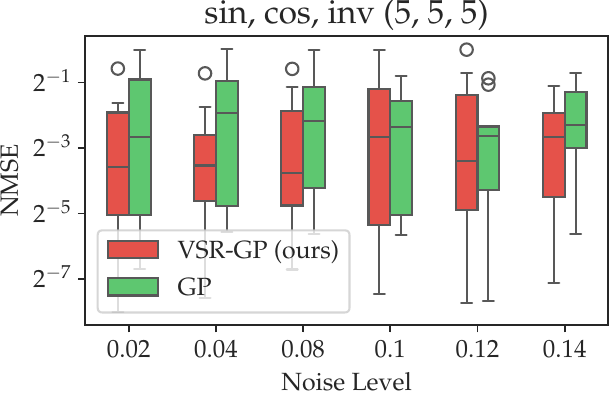}
    \includegraphics[width=0.4\linewidth]{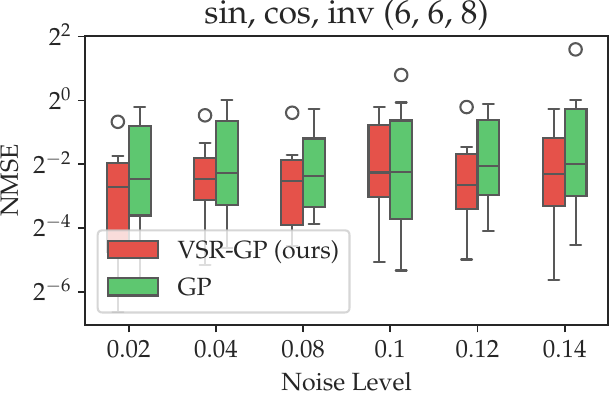} \\
    \caption{Box plots in NMSE values for the expressions found by \method~and GP over benchmark datasets with different noise levels. 
    Our \method is consistently the best regardless of the evaluation metrics and noise levels.
    }
    \label{fig:noise-level-metric}
\end{figure}

\medskip
\textbf{Full Quartile Distribution.} 
We show the full quartiles (25\%, 50\%, and 75\%) over the NMSE metric in noiseless settings (in Fig.~\ref{fig:Quartile-trig-nmse-noiseless-partial}). Here ``$\texttt{inv} (4,4,6)$'' is an abbreviation meaning that the dataset operators $O_p$ is $\{ \text{inv},+,-,\times\}$ and the configuration is $(4,4,6)$. Our \cvgp and \cvmt are consistently the best approaches. This demonstrates that vertical symbolic regression can boost state-of-the-art solvers to an even higher level. 

\begin{figure}[!t]
    \centering
    \includegraphics[width=0.41\linewidth]{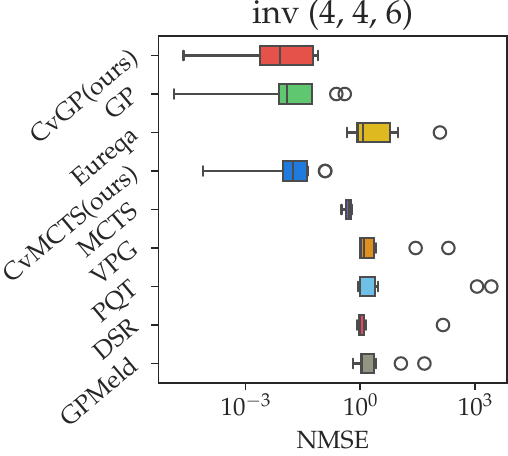}
    \includegraphics[width=0.286\linewidth]{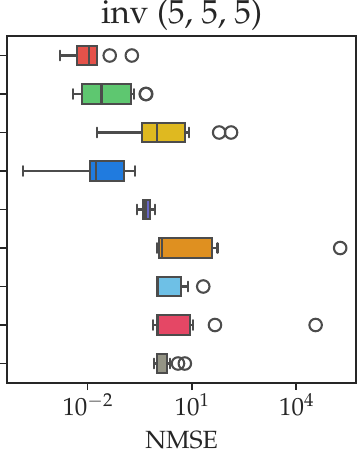}
    \includegraphics[width=0.286\linewidth]{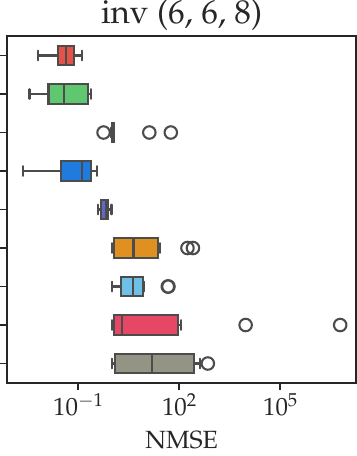} \hfill \\
    \vspace{1em}
    \includegraphics[width=0.405\linewidth]{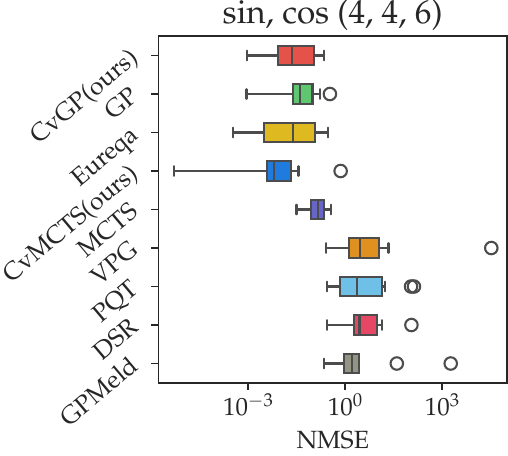}
    \includegraphics[width=0.285\linewidth]{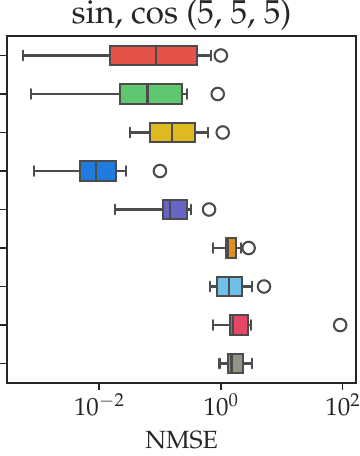}
    \includegraphics[width=0.289\linewidth]{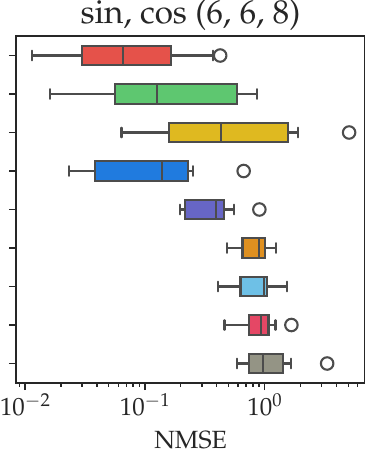} \hfill \\
    \vspace{1em}
    \includegraphics[width=0.41\linewidth]{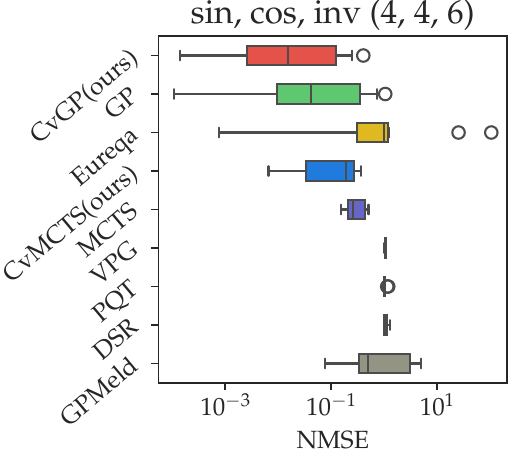}
    \includegraphics[width=0.286\linewidth]{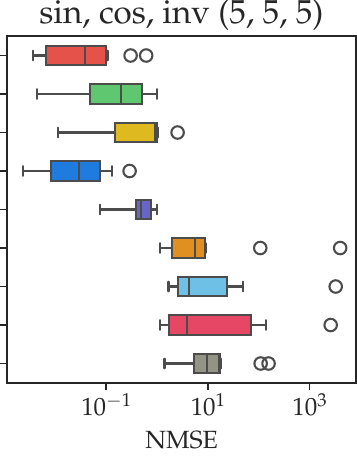}
    \includegraphics[width=0.286\linewidth]{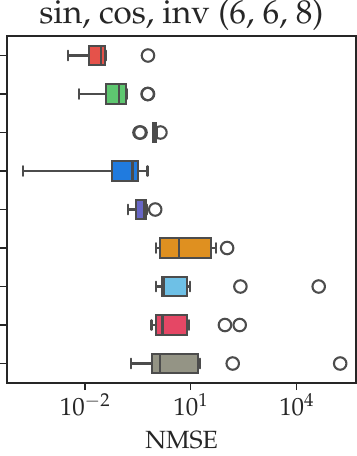}
    \caption{On \textit{noiseless} selected Trigonometric datasets, full quartiles (25\%, 50\% and 75\%) of NMSE values of all the methods. Our \cvgp outperforms GP and our \cvmt outperforms MCTS method. Adding control variable experiment improves NMSE values of best-discovered equations.}
    \label{fig:Quartile-trig-nmse-noiseless-partial}
\end{figure}

\medskip
\textbf{Different Evaluation Metrics.} 
We use box plots in Fig.~\ref{fig:evalucate-metric} to show that the superiority of our \method~approaches generalizes to other quantiles beyond the NMSE metric. 
We evaluate the best expression discovered by each algorithm on more evaluation metrics, including MSE, RMSE, NMSE, NRMSE, and $R^2$-based Accuracy.
Fig.~\ref{fig:evalucate-metric} demonstrates that our approaches \cvgp and \cvmt are the best regardless of the evaluation metrics.

\begin{figure}[!t]
    \centering
    \includegraphics[width=0.41\linewidth]{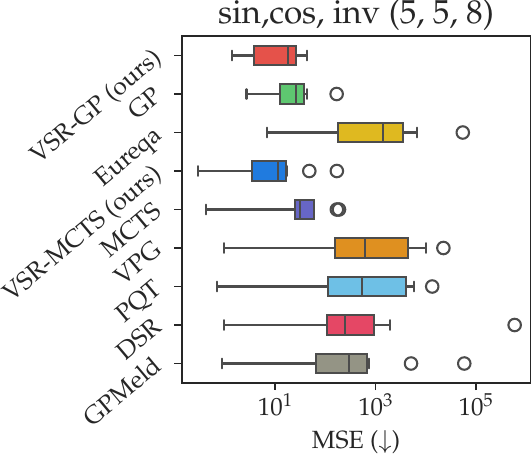}
    \includegraphics[width=0.285\linewidth]{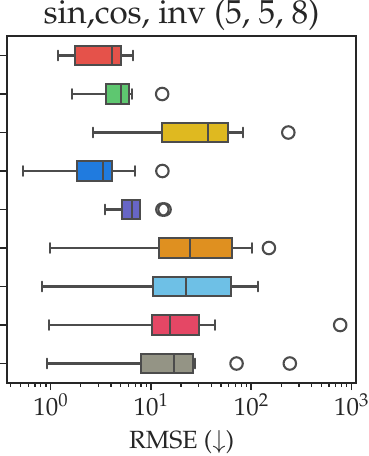}
    \includegraphics[width=0.275\linewidth]{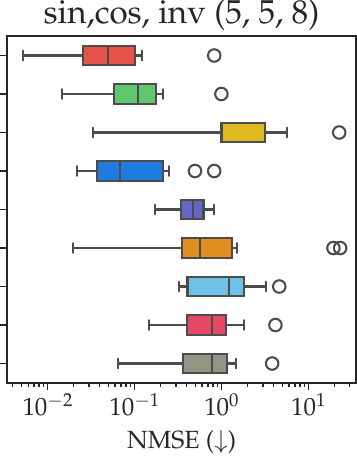}
      \includegraphics[width=0.41\linewidth]{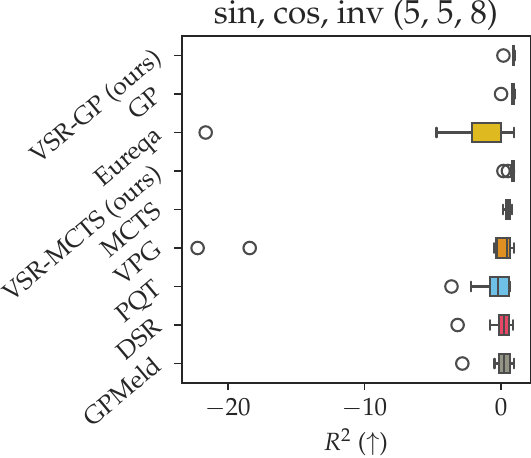}
    \includegraphics[width=0.28\linewidth]{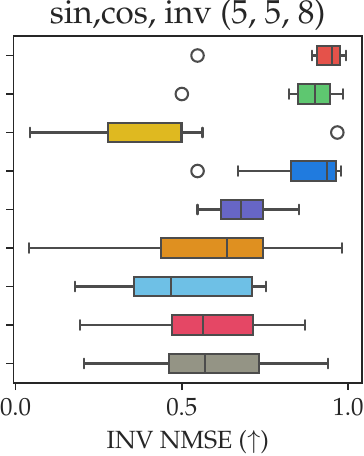}
    \includegraphics[width=0.275\linewidth]{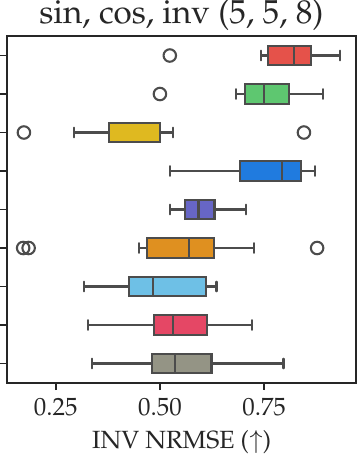}

    \caption{Box plots of different goodness-of-fit metrics for the expressions found by different algorithms on the same \textit{noiseless} dataset. $\uparrow$ indicate a higher score is better and $\downarrow$ indicates a smaller value is better.  Our \cvgp outperforms GP and our \cvmt outperforms MCTS method. Adding control variable experiment improves NMSE values of best-discovered equations.}
    \label{fig:evalucate-metric}
\end{figure}

\section{Conclusion}
In this research, we propose Vertical  Symbolic Regression (\method) to discover governing equations involving many independent variables from experimental data, which is beyond the capabilities of current state-of-the-art approaches.  
\method~follows a vertical discovery path -- it builds equations involving more and more independent variables using control variable experimentation. Because the first few steps following the vertical discovery route can be exponentially cheaper than discovering the equation in the full hypothesis space (horizontal route), VSP has the potential to supercharge state-of-the-art approaches
in uncovering complex scientific equations with more contributing factors than what current approaches can handle.
Theoretically, we show that \method~can bring an exponential reduction in the search spaces when learning a class of expressions. 
In experiments, we implement two variants of the proposed VSR framework, VSR using Genetic Programming (SR-GP) and VSR using Monte Carlo Tree Search (VSR-MCTS). \method finds the expressions with the smallest median Normalized Mean-Square Errors (NMSE) 
among all 7 competing approaches on noiseless datasets (in Table~\ref{tab:Trigonometric-nmse-noiseless} and Table~\ref{tab:feynman-livermore2-nmse-noiseless}) and 
20 noisy benchmark datasets (in Table~\ref{tab:Trigonometric-nmse-noisy}). In general, \cvgp attains the best empirical results on datasets with a large number of operators while \cvmt is the best on datasets with a median number of operators.
Evaluated on simpler equations, we show that our \method takes less training time and memory consumption, but has a higher rate of recovering the ground-truth expressions compared to baselines following the horizontal paths (in Table~\ref{tab:recovery}). 
We also demonstrate that our \method is consistently better than the baselines under different evaluation metrics (in Fig.~\ref{fig:evalucate-metric}),  different quantiles (25\%, 50\% and 75\%) of the NMSE metric (in Fig.~\ref{fig:Quartile-trig-nmse-noiseless-partial}), and with different amounts of Gaussian noise added to the data (in Fig.~\ref{fig:noise-level-metric}).

\section*{Acknowledgments}
This research was supported by NSF grant CCF-1918327 and DOE – Fusion Energy Science grant: DE-SC0024583.

\clearpage
\bibliography{reference}
\bibliographystyle{unsrtnat} 
\newpage

\appendix

\section{Implementation} \label{apx:impelemnt}

\subsection{Implementation of \method}
Our \method incurs several hyper-parameters, they are detailed as follows:
\begin{enumerate}
    \item Threshold for determining the closeness to zero score $\varepsilon=1E-10$. Consistent close-to-zero fitness scores suggest the fitted expression is close to the ground-truth equation in the reduced form.  That is $\sum_{k=1}^K\mathbb{I}(o_k\le \varepsilon)$ should equal to $K$, where $\mathbb{I}(\cdot)$ is an indicator function and  $\varepsilon$ is the threshold for the fitness scores.
    \item Number of multiple trials $K=5$ and the threshold on variances between multiple trials is $\epsilon=0.001$. Given the equation is close to the ground truth, an open constant having similar best-fitted values across trials suggests the open constants are stand-alone. Otherwise, that open constant is a \textit{summary} constant.
The $j$-th open constant is an standalone constant when $\mathbb{I}(\texttt{var}(\mathbf{c}_{l})\le \epsilon)$ is evaluated to $1$, where $\texttt{var}(\mathbf{c}_{l})$ indicates the variance of the solutions for $l$-th open constant. It is computed as $\frac{1}{K}\sum_{k=1}^K(C_{k,l}-\frac{1}{K}\sum_{k=1}^KC_{k,l})^2$. Here $\frac{1}{K}\sum_{k=1}^KC_{k,l}$ computes the averaged fitted value of $l$-th constant and  $ \epsilon$ is the threshold. In the experiment, we set $K=5$ and $\varepsilon=0.01$.
\item The set size of best expressions across all round $|\mathcal{Q}|=50$. It means we only save the top 50 expressions with fitness scores evaluated on non-controlled data.
\end{enumerate}

\begin{figure}[!ht]
    \centering
    \includegraphics[width=1\linewidth]{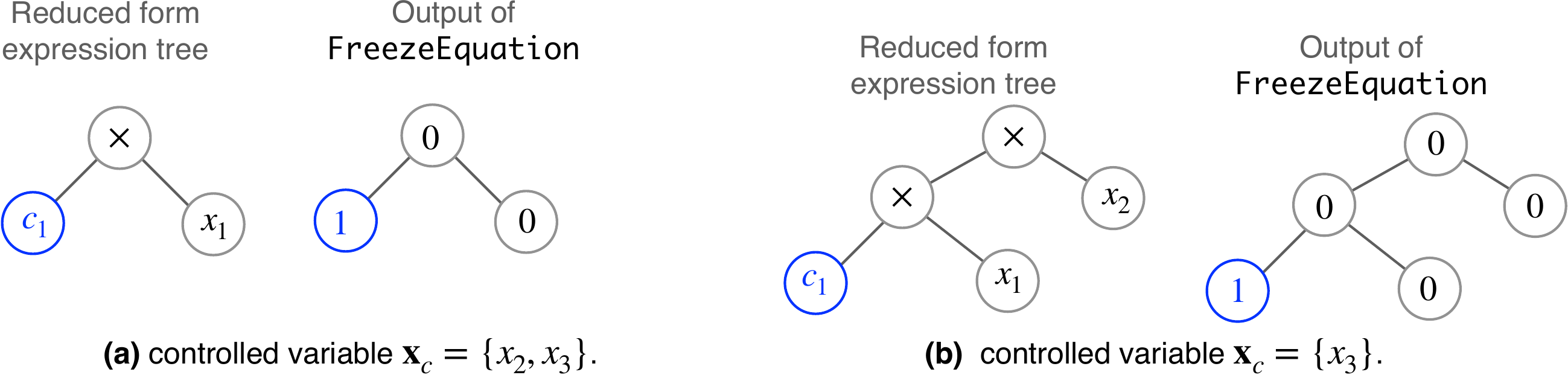}
    \caption{Visualization of the $\mathtt{FreezeEquation}$ function over expression tree in GP. The value ``$0$'' implies the corresponding node is non-editable by mutation or crossover operations in GP. Similarly, ``$1$'' implies the corresponding node is editable.}
    \label{fig:freeze-tree}
\end{figure}

\subsection{Implementation of Data Oracle} \label{apx:data_oracle}

We provide a Python package\footnote{\url{https://bitbucket.org/jiang631/scibench/src/master/src/scibench}.
The data Oracle is accessed by calling from an object of the class ``\texttt{Equation\_evaluator}''. We first will need to initialize an `\texttt{Equation\_evaluator}` object. This can be accomplished by first importing the necessary package:}

\begin{verbatim}
from scibench.symbolic_equation_evaluator import *
\end{verbatim}
{Then in the main program, execute:}
\begin{verbatim}
Eq_eval = Equation_evaluator(input_eq_name)
\end{verbatim}
where the ``\texttt{input\_eq\_name}'' denotes the input file name passed from the command line argument. ``\texttt{Equation\_evaluator}'' provides you with the following three functions:
\begin{itemize}
    \item \texttt{get\_nvars()}.  
This function returns the number of input variables of the symbolic equation.
    \item \texttt{get\_function\_set()}. This function returns the set of operators $O_p$ possibly used in the symbolic equation. 
    \item \texttt{evaluate(X)}. When queried by the input $X$, it returns noisy estimations of $f(X)$. Here, the datatype of $X$ is ``numpy.ndarray''. It is a matrix. The first dimension corresponds to the batch size and the second dimension corresponds to the ``\texttt{number\_of\_variables}''. Each row of $X$ represents one input to the symbolic expression. The output of the evaluation will be a vector, where the $i$-th output is the noisy estimation of $f(X)_i$.
\end{itemize}

\medskip
\textbf{Expression Storage Format.}
The pre-order traversal of the expression tree is used to store the expression. 
In a pre-order traversal of a binary tree, we first visit the current node,  then traverse its left subtrees, and finally traverse its right subtrees.

Here we use an example symbolic expression "$c_1x_1x_2/x_3$" with $c_1=8.31$. The expression to be stored in the file is based on the pre-order traversal. The pre-order traversal of expression is $[\times, \times, c_1, x_1,\div,x_2,x_3]$. Since we know the arity of every mathematical operator, the traversal corresponds to a unique expression tree. We save it in a textual format as follows:
\begin{verbatim}
['mul', 'mul', '8.31', 'x1', 'div', 'x2', 'x3']
\end{verbatim}

Here, 'mul' and 'div' stands for multiplication and division respectively. We further need to: (1) differentiate unary and binary operators for the internal nodes. For example `sine` function is a unary operator, that only takes one input. `mul` is a binary operator that needs two inputs. (2) differentiate constants and variables in the leaves.
Therefore, we define the \text{extended format} based on the pre-order traversal of the expression tree.
\begin{verbatim}
[('mul','binary'), ('mul','binary'), ('8.314', 'const'), ('x1', 'var'), 
 ('div', 'binary'), ('x2', 'var'), ('x3', 'var')]
\end{verbatim}
This implies that ``mul'' is a binary operator, which is denoted as 'binary';
 '8.31' is a constant, which is denoted as ``const'';  '$x_1$', '$x_2$' are input variables, which are denoted as ``var''. Based on this \textit{extended pre-order traversal format}, we can uniquely determine the expression tree and further uniquely determine its symbolic expression.

The final input to the data oracle is a single JSON file, that is:
\begin{verbatim}
{
  'num_vars': 3, 
  'var_domains':[(0, 1), (0, 1), (0, 1)],
  'function_set': ['add', 'sub', 'mul', 'div', 'const'], 
  'equation': [
        ('mul','binary'), ('mul','binary'), ('8.314', 'const'),  
        ('x1', 'var'), ('div', 'binary'), ('x2', 'var'), ('x3', 'var')
    ]
}
\end{verbatim}
where ``num\_vars'' is the number of variables in the symbolic expression.
``function\_set'' represents the set of mathematical operators. The symbolic expression will use a subset of the operators. ``var\_domains'' is the list of input variable domains. ``equation'' is the extended preorder traversal of the expression tree.

We acknowledge that there are many other APIs (such as Sympy) that offer a unified protocol for expression tree representation and expression evaluation. The API provided by us only relies on Numpy and does not include an extra Python library.

\subsection{Baselines Implementation}
We use publicly available implementations for baseline algorithms whenever possible and use our own implementation if not, and make necessary adjustments to ensure compatibility.

\medskip
\textbf{GP.} The implementation is based on the GP baseline in the DSO package. 
However, we re-implemented the code to make it compatible with the data oracle. 

\medskip
\textbf{Eureqa.} This algorithm is currently maintained by the DataRobot webiste\footnote{\url{https://docs.datarobot.com/en/docs/modeling/analyze-models/describe/eureqa.html}}. We use the python API provided to send the training dataset to the DataRobot website and collect the predicted expression after 30 minutes. This website only allows us to execute their program under a limited budget. Due to budgetary constraints, we were only able to test the datasets for the noiseless settings. 
For the Eureqa method, the fitness measure function is negative RMSE. We generated large datasets of size $10^5$ in training each benchmark.

\medskip
\textbf{DSR, PQT, GPMeld.} These algorithms are evaluated based on an implementation in  \footnote{\url{https://github.com/brendenpetersen/deep-symbolic-optimization}}. For every ground-truth expression, we generate a dataset of $10^5$ training samples. Then we execute all these baselines on the dataset with the configurations listed in Table~\ref{tab:baseline-hyper-config}. 
For the four baselines (\textit{i.e.}, PQT, VPG, DSR, GPMeld), the reward function is INV-NRMSE, which is defined as $\frac{1}{1+\text{NRMSE}}$.

\medskip\textbf{MCTS.} We follow the public implementation\footnote{\url{https://github.com/isds-neu/SymbolicPhysicsLearner}}. We change the code to make it compatible with the data oracle, to make a fair comparison.

Note that Wolfram was not considered in this research, because the current ``\texttt{FindFormula}'' function in the Wolfram language only supports searching for single variable expressions. Active Regression over Genetic Programming~\citep{DBLP:conf/gecco/HautPB23} uses active learning to adaptively query data to expedite the learning step, and is not selected for comparison. The reason is its public implementation only handles expressions without open constants and would be not compatible with the datasets in this research.

Note that the tree edit distance between the correct and predicted symbolic expression tree~\citep{matsubara2022rethinking} is not adopted. We use the provided API\footnote{\url{https://github.com/omron-sinicx/srsd-benchmark}} to compute the tree edit distance~\citep{DBLP:journals/siamcomp/ZhangS89} between the predicted expression tree to the correct expression tree. For the correctly predicted expressions, the normalized tree distance is still $0$. The reasons are that 1) different expressions that are numerically evaluated as the same have different expression trees, Sympy will use heuristics to determine the simplest result for unifying different expressions but still cannot handle long and complex expressions. 2) The original definition of tree edit distance treats left and right subtrees differently, however, it is not the case in the expression tree. Thus this metric is omitted.

\begin{table}[!t]
    \centering
    \caption{Major hyper-parameters settings for all the algorithms considered in the experiment.}
    \label{tab:baseline-hyper-config}
    
    \begin{tabular}{r|ccc} 
    \multicolumn{4}{c}{\textbf{(a)} Genetic Programming-based methods.} \\
    \toprule
          & \cvgp & GP&Eureqa  \\\midrule
         Fitness function & NegMSE &NegMSE &NegRMSE  \\ 
         Testing set size & $256$ & $256$ & $ 50, 000$  \\
      \#CPUs for training& 1 & 1 & N/A \\\midrule
        \#genetic generations & 200 & 200  &10,000\\
      
        Mutation Probability & {0.8} & 0.8 \\
        Crossover Probability & {0.8} & 0.8   \\
        \midrule
        \multicolumn{4}{c}{\textbf{(b)} Monte Carlo Tree Search-based methods.} \\
    \toprule
         &  \cvmt & \mcts \\ \midrule
        Fitness function & NegMSE &NegMSE  \\ 
         Testing set size & $256$ & $256$ \\
         \#CPUs for training& 1 & 1 \\
         \midrule 
         \multicolumn{4}{c}{\textbf{(c)} Deep reinforcement learning-based methods.} \\
         \toprule
          &  DSR & PQT & GPMeld \\\midrule
         Reward function  & InvNRMSE & InvNRMSE & InvNRMSE \\ 
        Training set size  & $$ 50, 000$$ & $ 50, 000$ & $ 50, 000$ \\
         Testing set size& $$ 256$$ & $256$ & $ 256$ \\
         Batch size & $1024$ & $1024$ & $1024$ \\
      \#CPUs for training& 8 & 8 & 8 \\
  $\epsilon$-risk-seeking policy  & 0.02 & 0.02 & N/A\\ \midrule
        \#genetic generations & N/A & N/A  & 60 \\
        \#Hall of fame  & N/A& N/A& 25  \\
        Mutation Probability  & 0.5 & N/A &N/A  \\
        Crossover Probability& 0.5 & N/A &N/A   \\
        \bottomrule
    \end{tabular}
\end{table}

\subsection{Hyper-parameter Configurations for Training} \label{sec:hyper-parameter}

We list the major hyper-parameter settings for all the algorithms in Table~\ref{tab:baseline-hyper-config}. In our \method, the threshold to freeze operands in \method is if the MSE to fit a data batch is below 1e-6. The threshold to freeze the value of a constant in \method is if the variance of best-fitted values of the constant across trials drops below 0.001.

Note that if we use the default parameter settings, the GPMeld algorithm takes more than 2 days to train on one dataset. 
Because of such slow performance, we cut the number of genetic programming generations in GPMeld by half to ensure fair comparisons with other approaches.

All the methods are executed on the same server hardware: the CPU is  ``Rome CPUs @ 2.0GHz'',  the maximum memory is 4GB for each program, and the maximum training time is 48 hours.  We specify the number of CPUs used for each program (noted as `` \#CPUs for training'') in Table~\ref{tab:baseline-hyper-config}. The Python version for all the programs is set as Python 3.7 while the DSO requires Python 3.6.15 due to dependency issues on compiling C-related cross-platform packages.

For the memory profile step, we use Python API memray\footnote{\url{https://bloomberg.github.io/memray}}. We only benchmark the peak memory incurred by the step of the algorithm learning stages. The memory of whole program initialization and data oracle initialization are not considered for comparison.

\begin{table}[!t]
    \centering
    \caption{Example expressions used in our experiments with different dataset configurations and the set of operators.}
    \label{tab:apx-dataset-example}
    \begin{tabular}{c|c}
    
     \multicolumn{2}{c}{\textbf{(a)} Trigonometric datasets containing operators $O_p=\{\texttt{inv},+,-,\times\}$.} \\ 
     \toprule
      Dataset Configs &  Example  expression  \\  \midrule
   (2,1,1) & $0.497-{0.682}/{x_1}-{0.735 x_1}/{x_0}$\\
    (3,2,2) & $-0.603 x_0x_1 + 0.744 x_0 + {0.09 x_1}/{x_2} + 0.562 + {0.582}/{x_2}$ \\
    \midrule
\multicolumn{2}{c}{\textbf{(b)} Trigonometric datasets containing operators $O_p=\{\sin,\cos,+,-,\times\}$.} \\ 
         \midrule
 (2,1,1) & $0.259 x_0 \sin(x_1) + 0.197 x_1 - 0.75$\\
 (3,2,2) & $-0.095 x_0 x_2 + 0.012 x_2 \sin(x_1) - 0.576 x_2 - 0.214 \cos(x_0) - 0.625$\\
 \midrule
 \multicolumn{2}{c}{\textbf{(c)} Trigonometric datasets containing operators $O_p=\{\sin,\cos,\texttt{inv},+,-,\times\}$.} \\ 
        \midrule
(2,1,1) & $0.727\sin(x_0)-0.386+{0.183}/{x_0}$ \\
(3,2,2)& ${0.716 x_0}/{x_2}- 0.063 x_1 + 0.274 x_2 \cos(x_1) - 0.729 +{0.062}/{x_2}$ \\
\bottomrule
    \end{tabular}
\end{table}

\begin{table}[!t]
    \caption{Configuration of the ideal gas law in Physics (in (a)Left) as a symbolic expression (in (a) right and (b)) with experimental data (in (c)). }
    \centering
    \label{table:idea_gas}
        \begin{tabular}{c|p{0.3\textwidth}||p{0.2\textwidth}|l} 
         \multicolumn{4}{c}{\textbf{(a)} The symbolic expression of the ideal gas law.} \\
            \toprule
             \multicolumn{2}{c||}{ \textbf{Physical formula}} &  \multicolumn{2}{c}{ \textbf{Symbolic Expression}}   \\              
             \multicolumn{2}{c||}{$P = \frac{RnT}{V}$} & \multicolumn{2}{c}{$y=\frac{c_1 x_1  x_2}{x_3}$} \\
              \midrule
             \textbf{Symbols} & \textbf{Physical Meaning} & \textbf{Variables}  & \textbf{Variable Domains} \\
            \hline
            $R$  & Ideal gas constant &  Constant $c_1$ & $8.31$  \\
            $n$  & Number of moles & Input variable $x_1$  & $(0.01, 10^4)$  \\
             $T$ & Absolute Temperature &  Input variable $x_2$ & $(10, 10^3)$   \\
            $V$   & Volume & Input variable $x_3$& $(10^{-3}, 10^4)$ \\
              $P$   & Absolute Pressure   &  Output variable $y$ \\
            \bottomrule
        \end{tabular}
\end{table}

\subsection{Dataset Configuration}\label{apx:dataset-config}


\medskip
 \textbf{Trigonometric Dataset.} For example, one ground-truth expression that generates a dataset labeled by the ``\texttt{inv}, $+, -, \times$'' operators with configuration $(2,1,1)$ can be:
\begin{equation}
0.496-\frac{0.682}{x_1}-\frac{0.734 x_1}{x_0}
\end{equation}
This expression contains two variables $x_1,x_2$, a cross term $\frac{x_1}{x_0}$ with a  constant  coefficient $0.734$, a single term $1/x_1$ with a  constant  coefficient $0.682$, and a constant $0.496$. We give more examples of such ground-truth expressions in Table~\ref{tab:apx-dataset-example}.

\medskip
 \textbf{Feynman Dataset.}
The expressions in the Feynman dataset have real-world physical meanings, where the type of variables and the range of input variables are connected to their real-world definitions. We give an example in Table~\ref{table:idea_gas}. The expression for ideal gas law is labeled with the equation ID I.39.22 in Feynman's textbook~\citep{leighton1965feynman}. The expression is formatted as $y=x_0c_0c_1/x_2$ in our uniformed notation, where the correspondence between the variables $\{x_1,x_2,x_3,c_0,y\}$ and $P,b,R,T,V$ is presented at Table~\ref{table:idea_gas}(a). The number of variables and the operator set $O_p$ are given as API to the learning algorithms. The preorder traversal is used for executing the output when the data oracle is queried. Further, the range and type of each variable are provided, which is used for the learning algorithm to sample appropriate input values before the query.

The main difficulty of discovering these equations is mainly determined by the number of variables involved in the expression. Thus we divide all the expressions in the Feynman dataset into 6 groups by the number of variables $K$, where the exact partitions (over the equation ID) are detailed in Table~\ref{table:partition-feynman}.

\begin{table}[!t]
    \caption{Partition of Feynman dataset by the number of input variables $m$.}
    \centering
    \label{table:partition-feynman}
        \begin{tabular}{l|l} 
            \toprule
            Partitions & Equation  ID \\ \midrule
            $K=1$ & I.6.20a, I.29.4, I.34.27, II.8.31, II.27.16, II.27.18, III.12.43 \\
            \midrule
            $n=2$ & I.12.1, I.6.20, I.10.7, I.12.4, I.14.3, I.12.5, I.14.4, I.15.10, I.16.6, I.25.13, I.26.2,\\
            &   I.32.5, I.34.10, I.34.14, I.38.12, I.39.10, I.41.16, I.43.31, I.48.2, II.3.24, II.4.23,  \\
            &  II.8.7, II.10.9,  II.11.28, II.13.17, II.13.23, II.13.34, II.24.17, II.34.29a,  \\
            &  II.38.14, III.4.32, III.4.33, III.7.38, III.8.54, III.15.14, bonus.8, bonus.10\\\midrule
            $n=3$ & I.6.20b, I.12.2, I.15.3t, I.15.3x, bonus.20, I.18.12, I.27.6, I.30.3, I.30.5, I.37.4,\\
            &   I.39.11, I.39.22, I.43.43, I.47.23, II.6.11, II.6.15b, II.11.27, II.15.4, II.15.5,  \\
            &  II.21.32,  II.34.2a, II.34.2, II.34.29b, II.37.1, III.13.18, III.15.12, III.15.27, III.17.37,  \\
            &  III.19.51, bonus.5, bonus.7, bonus.9, bonus.15, bonus.18\\\midrule
            $n=4$ & I.8.14, I.13.4, I.13.12, I.18.4, I.18.16, I.24.6, I.29.16, I.32.17, I.34.8, I.40.1,  \\
            & I.43.16, I.44.4, I.50.26, II.11.20, II.34.11, II.35.18, II.35.21, II.38.3, III.10.19,    \\
            &  III.14.14, III.21.20, bonus.1, bonus.3, bonus.11, bonus.19,\\ \midrule
            $n=5$ & I.12.11, II.2.42, II.6.15a, II.11.3, II.11.17, II.36.38, III.9.52, bonus.4,  \\
            & bonus.12, bonus.13, bonus.14, bonus.16 \\ \midrule
            $6\le n\le 8$ & I.11.19, bonus.2, bonus.17, bonus.6, I.9.18 \\
            \bottomrule
        \end{tabular}
\end{table}

\begin{table*}[!t]
    \small
    \centering
    \caption{Detailed equation in Livermore2 ($n=4$) datasets.}
    \label{tab:livermore2-1}
    \begin{tabular}{cl}
    \toprule
     \multicolumn{2}{c}{\textbf{Livermore2 ($n=4$)}} \\  
        Eq. ID  & Expression \\ \midrule
Vars4-1 & $x_1-x_2x_3-x_2-x_4$ \\
Vars4-2 & $x_1\sqrt{x_2}x_4/x_3$ \\
Vars4-3 & $2x_1+x_4-0.01+x_3/x_2$ \\
Vars4-4 & $x_1-x_4-(-x_1+\sin(x_1))^4/(x_1^8x_2^2x_3^2)$ \\
Vars4-5 & $x_1+\sin(x_2/(x_1x_2^2x_4^2(-3.22x_2x_4^2+13.91x_2x_4+x_3)/2+x_2))^2$ \\
Vars4-6 & $(-x_1-0.54\exp(x_1sqrt(x_4+\cos(x_2))\exp(-2x_1)))/x_3$ \\
Vars4-7 & $x_1+\cos(x_2/\log(x_2^2x_3+x_4))$ \\
Vars4-8 & $x_1(x_1+x_4+\sin((-x_1\exp(\exp(x_3))+x_2)/(-4.47x_1^2x_3+8.31x_3^3+5.27x_3^2)))-x_1$ \\
Vars4-9 & $x_1-x_4+\cos(x_1(x_1+x_2)(x_1^2x_2+x_3)+x_3)$ \\
Vars4-10 & $x_1+(x_1(x_4+(\sqrt{x_2}-\sin(x_3))/x_3))^{1/4}$ \\
Vars4-11 & $2x_1+x_2(x_1+\sin(x_2x_3))+\sin(2/x_4)$ \\
Vars4-12 & $x_1x_2+16.97x_3-x_4$ \\
Vars4-13 & $x_4(-x_3-\sin(x_1^2-x_1+x_2))$ \\
Vars4-14 & $x_1+\cos(x_2^2(-x_2+x_3+3.23)+x_4)$ \\
Vars4-15 & $x_1(x_2+\log(x_3+x_4+\exp(x_2^2)-0.28/x_1))-x_3-\frac{x_4}{2x_1x_3}$ \\
Vars4-16 & $x_3(-x_4+1.81/x_3)+\sqrt{x_2(-x_1^2\exp(x_2)-x_2)}-2.34x_4/x_1$ \\
Vars4-17 & $x_1^2-x_2-x_3^2-x_4$ \\
Vars4-18 & $x_1+\sin(2x_2+x_3-x_4\exp(x_1)+2.96\sqrt{-0.36x_2^3+x_2x_3^2+0.94}$\\
& $+\log((-x_1+x_2)\log(x_2)))$ \\
Vars4-19 & $(x_1^3x_2-2.86x_1+x_4)/x_3$ \\
Vars4-20 & $x_1+x_2+6.21+1/(x_3x_4+x_3+2.08)$ \\
Vars4-21 & $x_1(x_2-x_3+x_4)+x_4$ \\
Vars4-22 & $x_1-x_2x_3+x_2\exp(x_1)-x_4$ \\
Vars4-23 & $-x_1/x_2-2.23x_2x_3+x_2-2.23x_3/\sqrt{x_4}-2.23\sqrt{x_4}+\log(x_1)$ \\
Vars4-24 & $-4.81\log(\sqrt{x_1\sqrt{\log(x_1(x_1x_2+x_1+x_4+log(x_3)))}})$ \\
Vars4-25 & $0.38+(-x_1/x_4+cos(2x_1x_3/(x_4(x_1+x_2x_3)))/x_4)/x_2$ \\
\hline
    \end{tabular}
\end{table*}

\begin{table*}[!t]
    \centering
    \caption{Detailed equations for Livermore2 ($n=5$) dataset.}
    \label{tab:livermore2-2}
    \begin{tabular}{c|l}
    \toprule
 \multicolumn{2}{c}{\textbf{Livermore2 ($n=5$)}} \\  
        Eq. ID & Expression \\ \midrule
Vars5-1	& $	-x_1+x_2-x_3+x_4-x_5-4.75	$ \\
Vars5-2	& $	x_3\left(x_1+x_5+\frac{0.27}{(x_3^2+\frac{(x_2+x_4)}{(x_1x_2+x_2)})}\right)	$ \\
Vars5-3	& $	2x_1x_2x_3+x_5-\sin(x_1\log(x_2)-x_1+x_4)	$ \\
Vars5-4	& $	x_2+x_3x_4+x_5^2+\sin(x_1)	$ \\
Vars5-5	& $	x_5+0.36\sqrt{(\log(x_1x_2+x_3+\log(x_2+x_4)))}	$ \\
Vars5-6	& $	x_1x_4+x_1+x_2+x_5+\sqrt{(0.08x_1/(x_3x_5)+x_3)}	$ \\
Vars5-7	& $	x_1x_5+\sqrt{(x_1x_2\cos(x_1)-x_1/(x_2+x_3+x_4+8.05))}	$ \\
Vars5-8	& $	\sqrt{(x_2)}x_3-x_4-0.07(x_1+(x_1-x_2)\sqrt{(x_2-0.99)})\cos(x_5)	$ \\
Vars5-9	& $	x_1(x_3+(x_1+x_2)/(x_2x_4+x_5))	$ \\
Vars5-10	& $	x_1/(x_4(-0.25x_1x_3x_4+x_2-8.43x_4x_5)\sin(x_3+\log(x_2)))+x_4x_5	$ \\
Vars5-11	& $	-x_4^2+\sqrt{\frac{x_1(x_3+x_5)-x_2+x_5}{x_3}}+0.47\sqrt{x_3\frac{x_1-\sqrt{x_2}+x_2}{x_2}}	$ \\
Vars5-12	& $	x_1\left(x_2-\frac{1}{x_3(x_4+x_5)}\right)	$ \\
Vars5-13	& $	\sqrt{(x_1(x_5(x_2-1.52)-\cos(4.03x_3+x_4)))}	$ \\
Vars5-14	& $	-x_1/(x_2x_5)+\cos(x_1x_3x_4\exp(-x_2))	$ \\
Vars5-15	& $	-x_4+\log(x_1/\log(11.06x_2x_5^2)+x_3)-\cos(x_2+x_5+\sqrt{(x_2x_5)})	$ \\
Vars5-16	& $	x_2+0.33x_5(x_1/(x_1^2+x_2)+x_3x_4^(3/2))	$ \\
Vars5-17	& $	x_1-\sin(x_2)+\sin(x_3)-\cos(-x_2+\sqrt{(x_4)}+x_5)+0.78	$ \\
Vars5-18	& $	x_1x_2-x_4-(\sqrt{(x_3^2/(x_1(x_3+x_4)))}-1.13/x_3)/x_5	$ \\
Vars5-19	& $	4.53x_1x_2+x_1-x_1\cos(\sqrt{(x_2)})/x_2-x_3-x_4-x_5	$ \\
Vars5-20	& $	-\exp(x_1+x_5)+\sin(x_1-4.81)/(0.21(x_5-\log(x_3+x_4)-\exp(x_5))/x_2)	$ \\
Vars5-21	& $	\sqrt{(x_4)}(2x_1+\cos(x_1(x_3x_4\exp(x_1x_2)+x_3-\log(x_3)-3.49))/x_5)	$ \\
Vars5-22	& $	-x_1-x_2+x_3+\sqrt{(x_1-x_2(\sin(x_3)-\log(x_1x_5/(x_2^2+x_4))/x_4))}-0.73	$ \\
Vars5-23	& $	x_1(x_2/(x_3+\sqrt{(x_2(x_4+x_5))}(-x_3+x_4))-x_5)	$ \\
Vars5-24	& $	-x_2x_5+\sqrt{(x_1+x_2(-x_1+x_4\cos(\sqrt{x_3}+x_3)-\frac{x_2+7.84x_3^2x_5}{x_5})+\frac{x_2}{x_3}}	$ \\
Vars5-25	& $	x_1+\log(x_1(-3.57x_1^2x_2+x_1+x_2+x_3\log(-x_1x_4\sin(x_3)/x_5+x_3)))	$ \\
         \bottomrule
    \end{tabular}
\end{table*}

\begin{table*}[!t]
    \centering
    \caption{Detailed equations for Livermore2 ($n=6$) dataset. $n$ stands for the number of maximum variables.}
    \small
    \label{tab:livermore2-3}
    \begin{tabular}{c|l}
\toprule
         \multicolumn{2}{c}{\textbf{Livermore2 ($n=6$)}} \\  
        Eq. ID  & Expression \\ \midrule
Vars6-1	& $x_1-x_6+(x_1+x_4+x_5)\sqrt{(x_1^2+x_2-x_3)}$ \\
Vars6-2	& $	x_1(2x_2+x_2/x_3+x_4+\log(x_1x_5x_6))	$ \\
Vars6-3	& $	\sqrt{(x_2+x_5-x_6+x_3^4x_4^4/(x_1x_2^4))}	$ \\
Vars6-4	& $	x_1(x_2(x_1^2+x_1)-x_2+x_3^2-x_3-x_5-x_6-\sin(x_4)-\cos(x_4))^2	$ \\
Vars6-5	& $	x_2\sqrt{(x_1x_2)(x_1x_3-x_3-x_4)+x_5+x_6}	$ \\
Vars6-6	& $	(x_1/(x_2x_3+\log(\cos(x_1))^2)-x_2x_4+\sin((x_2x_4+x_5)/x_6)+\cos(x_3))\log(x_1)	$ \\
Vars6-7	& $	x_1\sqrt{(x_1-x_6^2+\sin((x_1\exp(-x_2)-x_4(x_2+x_3^2))/(x_2+x_5)))}	$ \\
Vars6-8	& $	x_1+x_2^2+0.34x_3x_5-x_4+x_6	$ \\
Vars6-9	& $	x_4(x_1+\exp(13.28x_3^2x_6-x_5^2\log(x_2^4))/(x_1x_3-x_2^2+x_2-x_6-\log(x_3)))	$ \\
Vars6-10	& $	x_1+61.36x_2^6+x_2/(x_1x_3(x_4-\cos(x_4(2x_1x_2x_6/x_5+x_5))))	$ \\
Vars6-11	& $	(x_1+x_1/(x_2+x_4(8.13x_1^2x_6+x_1x_2x_3+2x_2+x_5+x_6)))^2	$ \\
Vars6-12	& $	(\sqrt{2}\sqrt{(x_1)}-x_2-x_3/\sqrt{(x_4(8.29x_1x_3^2+x_1x_5)+x_4+x_6)})/x_6	$ \\
Vars6-13	& $x_1+x_5+0.21\sqrt{(x_1/(x_2^2x_3^2\sqrt{(x_6)}(\sqrt{(x_3)}+x_3+2x_6+(x_2+x_4+x_5)/x_5)))}	$ \\
Vars6-14	& $	-2.07x_6+\log(x_2-x_6-\sqrt{(x_3(x_5+\log(-x_1+x_5+1))/x_4)})	$ \\
Vars6-15	& $	x_1(x_1+\cos(x_2^2x_3x_4(x_5-0.43x_6^2)))/x_4	$ \\
Vars6-16	& $	-\sqrt{(x_1)}-x_1+x_2-x_4-x_5-\sqrt{(x_6/x_3)}-3.26	$ \\
Vars6-17	& $	x_1/(x_2x_4(-x_5+\log(x_6^2\cos(2x_2+x_3^2-x_4)^2))(129.28x_1^2x_2^2+x_3))	$ \\
Vars6-18	& $	\sqrt{x_5}(2x_1+\cos(x_1(x_3x_4\exp(x_1x_2)+x_3-\log(x_3)-3.49))/x_6)	$ \\
Vars6-19	& $	x_1+x_2+x_3+0.84\sqrt{-x_3x_6+x_4-x_5+\sqrt{(x_2+\log(x_3+\exp(x_2)))/(x_2-x_4)}}$ \\
Vars6-20	& $	(x_1-0.97x_1/(x_5-x_6(x_1^(3/2)x_4+x_6))-x_2+x_3+\sin(x_1^2)/x_1)^2	$ \\
Vars6-21	& $	x_1+x_3+(x_1+\sin(-3.47x_2\log(x_6)/x_5+x_4+25.56\exp(x_5^2)/x_2)^2)\sin(x_2)	$ \\
Vars6-22	& $	x_1+(x_4+\sin(-0.22(x_3-x_4+1.0))\cos(x_6))\cos(x_2+2.27x_5)	$ \\
Vars6-23	& $	x_1+x_4+\log(x_1^2+x_1(-x_6+1.88\sqrt{(0.71x_1+x_2+0.28(x_3-x_4/x_5))}))	$ \\
Vars6-24	& $	-0.59(1.42(0.24x_2+\sqrt{x_3}/(x_6\sqrt{(-x_4+x_5)}))^(1/4)+\sin(x_1))/x_6	$ \\
Vars6-25	& $	x_1-x_2^2-x_3+x_5\cos(x_3)+x_5+x_6-2.19\sqrt{(-x_3-0.44/x_4)}	$ \\
\hline
    \end{tabular}
\end{table*}

\subsection{Livermore2 Dataset}
In Tables~\ref{tab:livermore2-1},~\ref{tab:livermore2-2},~\ref{tab:livermore2-3}, we detail the exact equations of Livermore2~\citep{DBLP:conf/iclr/PetersenLMSKK21}. The set of operators used for each expression is different, please see the data folder in the publicly available code base.

\end{document}